\def\ours{Ease-In-Ease-Out fine-tuning}
\DeclareMathOperator{\supp}{supp}
\newtheorem{theorem}{Theorem}
\newtheorem{lemma}[theorem]{Lemma}
\newtheorem{proposition}[theorem]{Proposition}
\newtheorem{proposition1}[theorem]{Proposition}
\newtheorem{remark}[theorem]{Remark}
\theoremstyle{definition}
\newtheorem{definition}{Definition}[section]
\newcommand{\prg}[1]{\noindent\textbf{#1}.}
\title{Transfer Reinforcement Learning across Homotopy Classes}
\author{Zhangjie Cao$^{*1}$ and Minae Kwon$^{*2}$ and Dorsa Sadigh$^{3}$% <-this % stops a space
\thanks{Manuscript received October 15, 2020; revised January 6, 2021; accepted January 11, 2021.}% <-this % stops a space
\thanks{This paper was recommended for publication by Editor Dana Kulic upon evaluation of the Associate Editor and Reviewers' comments.}
\thanks{This work was supported by NSF Award Number 2006388 and the DARPA Hicon-Learn project}  %Use only for final RAL version
\thanks{$^{1}${\tt\small caozj@cs.stanford.edu}, $^{2}${\tt\small minae@cs.stanford.edu}, and $^{3}${\tt\small dorsa@stanford.edu}. The authors are with the Department of Computer Science, Stanford University, Stanford, CA 94305, USA}
\thanks{$*$ means equal contribution.}
\thanks{Digital Object Identifier (DOI): see top of this page.}
 }
\begin{document}

\maketitle

\begin{abstract}
The ability for robots to transfer their learned knowledge to new tasks---where data is scarce---is a fundamental challenge for successful robot learning.
While fine-tuning has been well-studied as a simple but effective transfer approach in the context of supervised learning, it is not as well-explored in the context of reinforcement learning.
In this work, we study the problem of fine-tuning in transfer reinforcement learning when tasks are parameterized by their reward functions, which are known beforehand. We conjecture that fine-tuning drastically underperforms when source and target trajectories are part of different \emph{homotopy classes}. We demonstrate that fine-tuning policy parameters across homotopy classes compared to fine-tuning within a homotopy class requires more interaction with the environment, and in certain cases is impossible. We propose a novel fine-tuning algorithm, \ours, that consists of a relaxing stage and a curriculum learning stage to enable transfer learning across homotopy classes. Finally, we evaluate our approach on several robotics-inspired simulated environments and empirically verify that the \ours\ method can successfully fine-tune in a sample-efficient way compared to existing baselines. 
\end{abstract}

\section{Introduction}
One of the goals of transfer learning is to efficiently learn policies in tasks where sample collection is cheap and then transfer the learned knowledge to tasks where sample collection is expensive.
Recent deep reinforcement learning (Deep RL) algorithms require an extensive amount of data, which can be difficult, dangerous, or even impossible to obtain~\cite{mnih2013playing,sallab2017deep,zhu2017target,murphy2014disaster}. Practical concerns regarding sample inefficiency make transfer learning a timely problem to solve, especially in the context of RL for robotics. Robots should be able to efficiently transfer knowledge from related tasks to new ones. For instance, consider an assistive robot that learns to feed a patient with a neck problem. The robot could not learn a sophisticated feeding policy when directly trained with a disabled patient in-the-loop, due to the limited number of interactions with the patient. Instead, the robot can learn how to feed abled-bodies, where it is easier to obtain data, and transfer the learned knowledge to the setting with the disabled patient using only a few samples.

We study transfer in the reinforcement learning setting where different tasks are parameterized by their reward function. While this problem and its similar variants have been studied using approaches like meta-RL~\cite{duan2016rl,wang2016learning, nagabandi2018learning,finn2017model,gupta2018meta}, multitask learning~\cite{ruder2017overview,teh2017distral}, and successor features~\cite{barreto2017successor}, fine-tuning as an approach for transfer learning in RL is still not well-explored. Fine-tuning is an important method to study for two reasons. First, it is a widely-used transfer learning approach that is very well-studied in supervised learning~\cite{mesnil2011unsupervised,hinton2006reducing,yosinski2014transferable}, but the limits of fine-tuning have been less studied in RL. Second, compared to peer approaches, fine-tuning does not require strong assumptions about the target domain, making it a general and easily applicable approach. \emph{Our goal is to broaden our understanding of transfer in RL by exploring when fine-tuning works, when it doesn't, and how we can overcome its challenges.} Concretely, we consider fine-tuning to be more efficient when it requires less interactive steps with the target environment. 

In this paper, we find that fine-tuning does not always work as expected when transferring between rewards whose corresponding trajectories belong to different homotopy classes. A homotopy class is traditionally defined as a class of trajectories that can be continuously deformed to one another without colliding with any barriers~\cite{bhattacharya2012topological}, see Fig.~\ref{fig:homotopy_figs}~(a). In this work, we generalize the notion of barriers to include any set of states that incur a large negative reward. These states lead to phase transitions (discontinuities) in the reward function. We assume that we know these barriers (and therefore homotopy classes) beforehand, which is equivalent to assuming knowledge of the reward functions. Knowing the reward function a-priori is a commonly made assumption in many robotics tasks, such as knowing goals~\cite{kober2013reinforcement,kalashnikov2018qt,plappert2018multi} or having domain knowledge of unsafe states beforehand~\cite{garcia2012safe,turchetta2016safe}. Also, reinforcement learning algorithms naturally assume that the reward function is available~\cite{sutton2018reinforcement}. Generalizing the notion of barriers allows us to go beyond robotics tasks classically associated with homotopy classes, e.g., navigation around barriers, to include tasks like assistive feeding. \emph{Our key insight is that fine-tuning continuously changes policy parameters and that leads to continuously deforming trajectories.} Hence, fine-tuning across homotopy classes will induce trajectories that intersect with barriers. This will introduce a high loss and gradients that point back to the source policy parameters. So it is difficult to fine-tune the policy parameters across homotopy classes. To address this challenge, we propose a novel \ours\ approach consisting of two stages: a Relaxing Stage and a Curriculum Learning Stage. In the Relaxing Stage, we relax the barrier constraint by removing it. Then, in the Curriculum Learning Stage, we develop a curriculum starting from the relaxed reward to the target reward that gradually adds the barrier constraint back.

The contributions of the paper are summarized as follows:
\begin{itemize}
    \item We introduce the idea of using homotopy classes as a way of characterizing the difficulty of fine-tuning in reinforcement learning. We extend the definition of homotopy classes to general cases and demonstrate that fine-tuning across homotopy classes requires more interaction steps with the environment than fine-tuning within the same homotopy class.
    \item We propose a novel \ours\ approach that fine-tunes across homotopy classes, and consists of a relaxing and a curriculum learning stage.
    \item We evaluate \ours~on a variety of robotics-inspired environments and show that our approach can learn successful target policies with less interaction steps than other fine-tuning approaches.
\end{itemize}

\section{Related Work}

\prg{Fine-tuning}
Fine-tuning is well-studied in the space of supervised learning~\cite{lange2012autonomous,finn2016deep,pinto2016supersizing,levine2016end,gupta2018robot,deng2009imagenet,sadeghi2018sim2real}. Approaches such as $L^2$-SP penalize the Euclidean distance of source and fine-tuned weights~\cite{xuhong2018explicit}. Batch Spectral Shrinkage penalizes small singular values of model features so that untransferable spectral components are repressed~\cite{chen2019catastrophic}. Progressive Neural Networks (PNN) transfer prior knowledge by merging the source feature into the target feature at the same layer~\cite{rusu2016progressive}. These works achieve state-of-the-art fine-tuning performance in supervised learning; however, directly applying fine-tuning methods to transfer RL does not necessarily lead to successful results as supervised learning and reinforcement learning differ in many factors such as access to labeled data or the loss function optimized by each paradigm~\cite{barto2004reinforcement}. We compare our approach with these fine-tuning methods for transfer RL.

In fine-tuning for robotics, a robot usually pre-trains its policy on a general source task, where there is more data available, and then fine-tunes to a specific target task. Recent work in vision-based manipulation shows that fine-tuning for off-policy RL algorithms can successfully adapt to variations in state and dynamics when starting from a general grasping policy~\cite{julian2020efficient}. As another example, RoboNet trains models on different robot platforms and fine-tunes them to unseen tasks and robots~\cite{dasari2019robonet}. A key difference is that our work proposes a systematic approach using homotopy classes for discovering when fine-tuning can succeed or fail. This is very relevant to existing literature in this domain, as our approach can explain \emph{why} a general policy, e.g., a general grasping policy, \emph{can} or \emph{cannot} easily be fine-tuned to more specific settings. 

\prg{Transfer Reinforcement Learning} There are several lines of work for transfer RL including successor features, meta-RL and multitask learning. We refer the readers to~\cite{taylor2009transfer} for a comprehensive survey. We compare these works to our approach below.

\textit{Successor Features}. Barreto et al., address the same reward transfer problem as ours by learning a universal policy across tasks based on successor features~\cite{barreto2017successor}. However, this work makes a number of assumptions about the structure of the reward function and requires that the rewards between source and target tasks be close to each other, while our work has no such constraints.

\textit{Meta-RL}.
Meta learning provides a generalizable model from multiple (meta-training) tasks to quickly adapt to new (meta-test) tasks. There are various Meta RL methods including RNN-based~\cite{duan2016rl,wang2016learning,nagabandi2018learning}, gradient-based~\cite{finn2017model,gupta2018meta, nagabandi2018learning}, or meta-critic approaches~\cite{sung2017learning}. The gradient-based approach is the most related to our work, which finds policy parameters (roughly akin to finding a source task) that enable fast adaptation via fine-tuning. Note that all meta-RL approaches assume that agents have access to environments or data of meta-training tasks, which is not guaranteed in our setting. Here our focus is to discover when fine-tuning is generally challenging based on homotopy classes. In our experiments we compare our algorithm to core fine-tuning approaches rather than techniques that leverage ideas from fine-tuning or build upon them.

\textit{Multitask learning}. Other works transfer knowledge by simultaneously learning multiple tasks or goals~\cite{ruder2017overview}. In these works, transfer is enabled by learning shared representations of tasks and goals~\cite{deisenroth2014multi,teh2017distral,ruder2017overview, nair2018visual}. In our work, we consider the setting where tasks are learned sequentially.

\textit{Regularization}. Cobbe et al's work~\cite{cobbe2019quantifying} proposes a metric to quantify the generalization ability of RL algorithms and compare the effects of different regularization techniques on generalization. The paper compares the effects of deeper networks, batch normalization, dropout, L2 Regularization, data augmentation and stochasticity ($\epsilon$-greedy action selection and entropy bonus). The proposed techniques are designed for general purpose transfer reinforcement learning but are not specially designed for transfer reinforcement learning across homotopy classes. We compare our approach against using deeper networks, dropout, and entropy bonuses in our Navigation and Lunar Lander experiments and show that these techniques alone are not sufficient to transfer across homotopy classes (see supplementary materials).

\section{Fine-tuning across Homotopy Classes}\label{sec:difficulty}
In transfer reinforcement learning, our goal is to fine-tune from a source task to a target task. We formalize a task using a Markov Decision Process $\mathcal{M}= \langle \mathcal{S}, \mathcal{A}, p, \mathcal{R}, \rho_0, \gamma \rangle$, where $\mathcal{S}$ is the state space, $\mathcal{A}$ is the action space, $p: \mathcal{S} \times \mathcal{A} \times \mathcal{S}\rightarrow [0,1] $ is the transition probability, $\rho_0$ is the initial state distribution, $\mathcal{R}: \mathcal{S} \times \mathcal{A} \rightarrow \mathbb{R}$ is the reward function, and $\gamma$ is the discount factor. We denote $\mathcal{M}_s$ as the source task and $\mathcal{M}_t$ as the target task. We assume that $\mathcal{M}_s$ and $\mathcal{M}_t$ only differ on reward function, i.e., $\mathcal{R}_s \neq \mathcal{R}_t$. These different reward functions across the source and target task can for instance capture different preferences or constraints of the agent. A stochastic policy $\pi: \mathcal{S} \times \mathcal{A} \rightarrow [0,1]$ defines a probability distribution over the action in a given state. The goal of RL is to learn an optimal policy $\pi^*$, which maximizes the expected discounted return $\eta_{\pi}=\mathbb{E}_{\xi \sim \pi}\large[G_\tau(\xi)\large] = \mathbb{E}_{s_0 \sim \rho_0,\pi}\left[\sum_{\tau=0}^{\infty}\gamma^{\tau} \mathcal{R}(s_{\tau}, a_{\tau})\right]$. We define a trajectory to be the sequence of states the agent has visited over time $\xi=\{s_0, s_1, \dots \}$, and denote $\xi^*$ to be a trajectory produced by the optimal policy, $\pi^*$. We assume that the optimal policy for the source environment $\pi^*_s$ is available or can be easily learned. Our goal is then to leverage knowledge from $\pi^*_s$ to learn the optimal policy $\pi^*_t$ for task $\mathcal{M}_t$. We aim to learn $\pi^*_t$ with substantially fewer training samples than other comparable fine-tuning approaches. 

\begin{figure}
    \centering
    \includegraphics[width=\columnwidth, height=32mm]{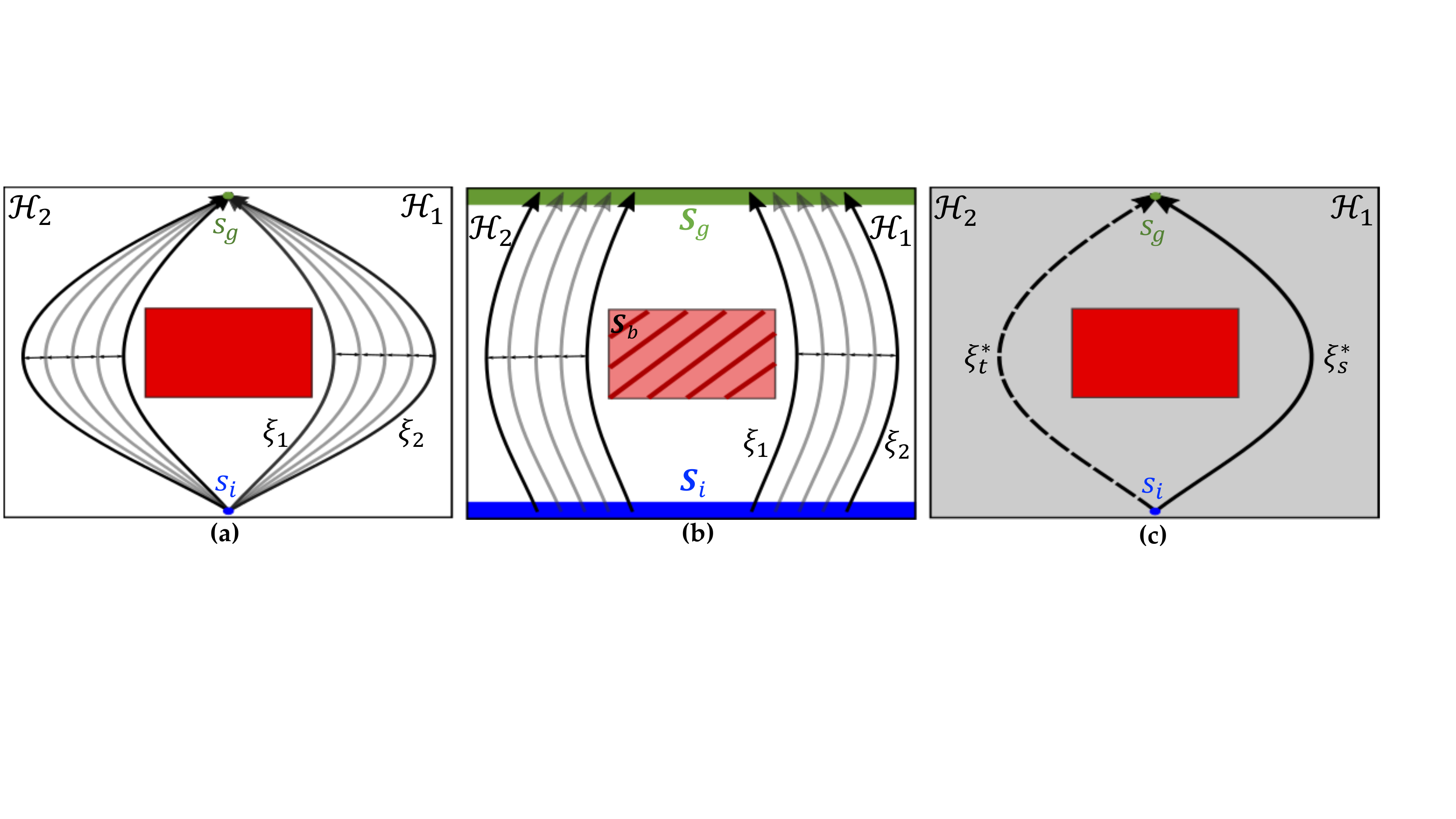}
    \caption{\textbf{(a)} Homotopy classes. $\xi_1$ and $\xi_2$ are part of the same homotopy class because they can be continuously deformed into each other. \textbf{(b)} Generalized homotopy classes with expanded definitions of start, end, and barrier states. \textbf{(c)} Fine-tuning problem from \emph{left} side to the \emph{right} side. The goal is to find $\pi_t^*$ that produces $\xi_t^*$.}
    \label{fig:homotopy_figs}
    \vspace{-15pt}
\end{figure}

\subsection{Homotopy Classes}
Homotopy classes are formally defined by homotopic trajectories in navigation scenarios in \cite{bhattacharya2012topological}:
\begin{definition}
\textbf{Homotopic Trajectories and Homotopy Class.} Two trajectories $\xi_1, \xi_2$ connecting the same initial and end points $s_i, s_g$ are homotopic if and only if one can be continuously deformed without intersecting with any barriers. Homotopic trajectories are clustered into a homotopy class.~\footnote{Even though the presence of a single obstacle introduces infinitely many homotopy classes, in most applications we can work with a finite number of them, which can be formalized by the concept of $Z_2$-homology~\cite{chen2010measuring}. For algorithms that compute these homology classes see~\cite{chen2010measuring}.}
\end{definition}
Fig.~\ref{fig:homotopy_figs}~(a) illustrates a navigation scenario with two homotopy classes $\mathcal{H}_1$ and $\mathcal{H}_2$ separated by a red barrier. $\xi_1$ and $\xi_2$ can be continuously deformed into each other without intersecting the barrier, and hence are in the same homotopy class. 

\prg{Generalization} The original definition of homotopy classes is limited to navigation scenarios with deterministic trajectories and the same start and end states. We generalize this definition to encompass a wider range of tasks in three ways.

Firstly, we account for tasks where there could be more than one feasible initial and end state. We generalize the initial and end points $s_i, s_g$ to a set of states $\mathbf{S}_i$ and $\mathbf{S}_g$, where $\mathbf{S}_i$ contains all the possible starting states and $\mathbf{S}_g$ contains all possible ending states as shown in Fig.~\ref{fig:homotopy_figs}~(b). 

Secondly, we generalize the notion of a barrier to be a set of states that are penalized with large negative rewards $\mathbf{S}_b = \{s | \mathcal{R}(s,a) \leftarrow \mathcal{R}'(s,a) - M\}$, where $M$ is a large positive number and $\mathcal{R}'(s,a)$ is the reward without any barriers. Large negative rewards correspond to any negative phase transitions or discrete jumps in the reward function. Importantly, the generalized `barrier' allows us to define homotopy classes in tasks without physical barriers that penalize states with large negative rewards (see our Assistive Feeding experiment). Although source and target tasks differ in reward functions, they share the same barrier states.%, i.e., for all $s\in \mathbf{S}_b$, $\mathcal{R}_s(s,\cdot)-\mathcal{R}_t(s,\cdot) = 0$.

Thirdly, we need to generalize the notion of continuously deforming \emph{trajectories} to \emph{trajectory distributions} when considering stochastic policies. We appeal a distribution distance metric, Wasserstein-$\infty$ ($W_\infty$) metric, that penalizes \emph{jumps} (discontinuities) between trajectory distributions induced by stochastic policies. We can now define our generalized notion of homotopic trajectories.

\begin{definition}
\textbf{General Homotopic Trajectories.} Two trajectories $\xi_1, \xi_2$ with distributions $\mu_1$ and $\mu_2$ and with the initial states $s_i \in \mathbf{S}_i$ and the final states $s_g \in \mathbf{S}_g$ are homotopic if and only if one can be continuously deformed into the other in the $W_\infty$ metric without receiving large negative rewards. Definitions for the $W_\infty$ metric and $W_\infty$-continuity are in Section I of the supplementary materials.
\end{definition}

General homotopic trajectories are depicted in Fig.~\ref{fig:homotopy_figs}~(b). The generalized definition of a homotopy class is the set of general homotopic trajectories. Note that using the $W_\infty$ metric is crucial here. Homotopic equivalence of stochastic policies according to other distances like total variation, KL-divergence, or even $W_1$ is usually trivial because distributions that even have a tiny mass on all deterministic homotopy classes become homotopically equivalent. On the other hand, in the $W_\infty$ metric, the distance between distributions that tweak the percentages, even by a small amount, would be at least the minimum distance between trajectories in different deterministic homotopy classes, which is a constant. So to go from one distribution over trajectories to another one with different percentages, one has to make a \emph{jump} according to the $W_\infty$ metric.

\subsection{Challenges of Fine-tuning across Homotopy Classes}
\prg{Running Example} We explain a key optimization issue caused by barriers when fine-tuning across homotopy classes. We illustrate this problem in Fig.~\ref{fig:homotopy_figs}~(b). An agent must learn to navigate to its goal $s_g \in \mathbf{S}_g$ without colliding with the barrier. Assuming that the agent only knows how to reach $s_g$ by swerving right, denoted by $\xi_s^*$, we want to learn how to reach $s_g$ by swerving left (i.e., find $\pi_t^*$). 

We show how the barrier prevents fine-tuning from source to target in Fig.~\ref{fig:surface}. This figure depicts the loss landscape for the target task with and without barriers. All policies are parameterized by a single parameter $\theta \in \mathbb{R}^2$ and optimized with the vanilla policy gradient algorithm~\cite{baxter2001infinite}. Warmer regions indicate higher losses in the target task whereas cooler regions indicate lower losses. 

Policies that collide with barriers cause large losses shown by the hump in Fig.~\ref{fig:surface}~(b). Gradients point away from this large loss region, so it is difficult to cross the hump without a sufficiently large step size. In contrast, in Fig.~\ref{fig:surface}~(a), the loss landscape without the barrier is smooth, so fine-tuning is easy to converge. Details on the landscape plots are in Section IV of the supplementary materials. 

\begin{figure*}
    \centering
    \includegraphics[ height=35mm]{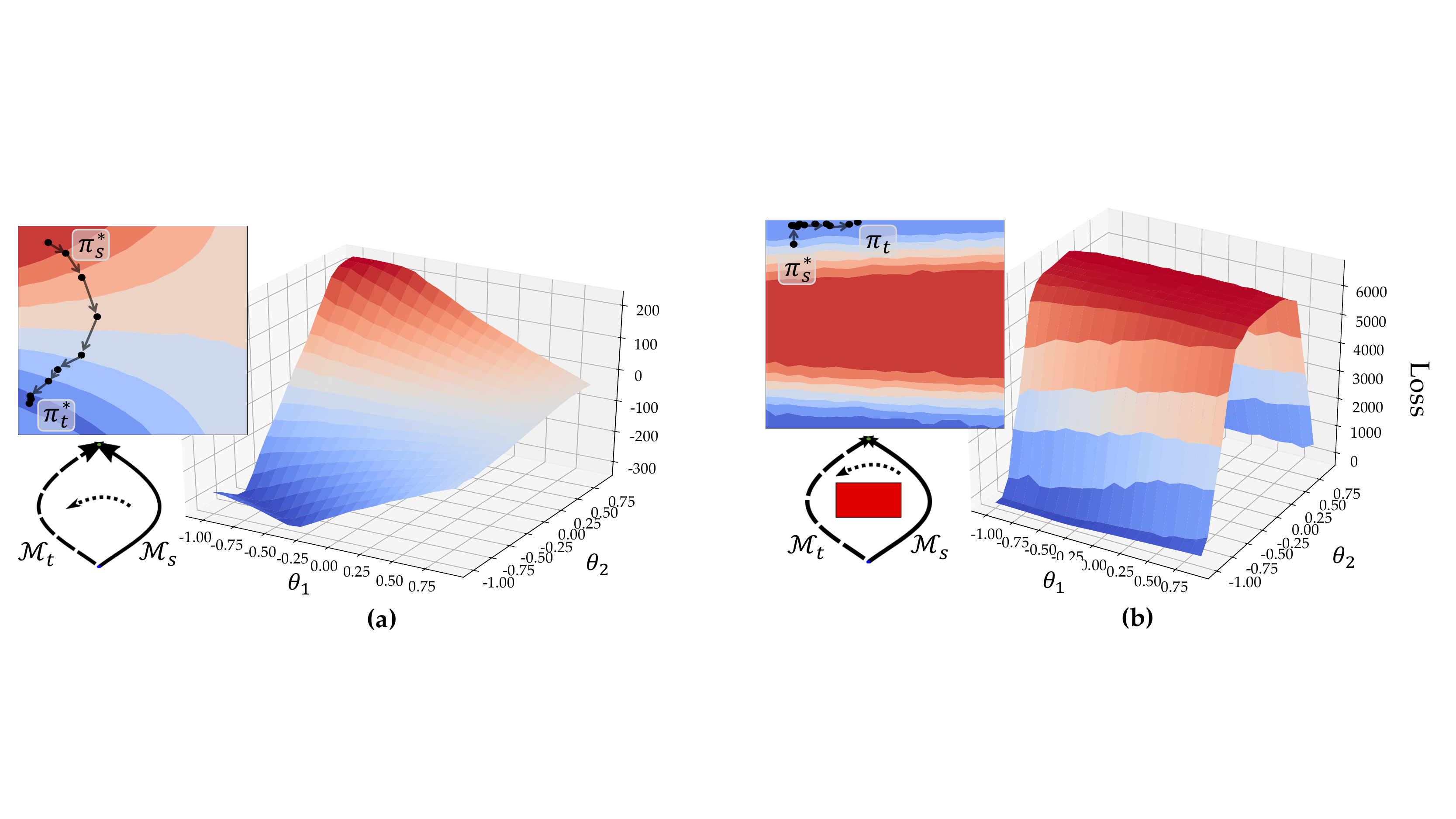}
    \caption{\textbf{(a)} Loss landscape of our running example without a barrier. The top-down pictures illustrate the gradient steps taken when fine-tuning from source to target tasks. \textbf{(b)} Loss landscape with a barrier. Barriers create gradients away from it that make it difficult to fine-tune from source to target tasks. }
    \label{fig:surface}
    \vspace{-15pt}
\end{figure*}

We now formally investigate how discontinuities in trajectory space caused by barriers affect fine-tuning of model-free RL algorithms. We let the model parameterized by $\theta$ to induce a policy $\pi_{\theta}$, and define the loss for the model to be $\ell(\theta)$. We assume that $\ell(\theta)$ is high when the expected return $\eta_{\pi_{\theta}} = \mathbb{E}_{\xi \sim \pi_{\theta}}[G(\xi)]$ is low. This assumption is satisfied in common model-free RL algorithms such as vanilla policy gradient. We optimize our policy using gradient descent with step size $\alpha$: $\theta_{k+1} = \theta_k - \alpha \nabla_{\theta}\ell(\theta)|_{\theta_k}$. We can now define what it means to fine-tune from one task to another. 

Let $\theta^*_{s}$ be the optimal set of parameters that minimizes the cost function on the source task. Using $\ell^t(\theta)$ as the loss for target reward, fine-tuning from $\mathcal{M}_s$ to $\mathcal{M}_t$ for $n$ gradient steps is defined as: $\theta_1 \leftarrow \theta_s^*$ and $\theta_{k+1} = \theta_k - \alpha \nabla_{\theta}\ell^t(\theta)|_{\theta_k}$ for $k = 1,\dots,n.$

We consider a policy to have successfully fine-tuned to $\mathcal{M}_t$ if the received expected return is less than $\epsilon$ away from the expected reward of the optimal target policy $\pi_t^*$ for some small $\epsilon$, i.e.,
$|\eta^t_{\pi_{\theta}} - \eta^{t}_{\pi_t^*}| < \epsilon$.

We now theoretically analyze why it is difficult to fine-tune across homotopy classes. Due to the space limit, we only include our main theorem and remark in the paper. We refer readers to the supplementary materials for the proofs.

\begin{definition}\textbf{$W_\infty$-continuity of policy.} 
    A policy $\pi_\theta$ parameterized by $\theta$ is $W_\infty$-continuous if the mapping $(\theta)\mapsto \pi_\theta(s, a)$, which maps a vector of parameters in a metric space to a distribution over state-actions is continuous in $W_\infty$ metric.
\end{definition}

\begin{definition}\textbf{$W_\infty$-continuity of transition probability function.} 
An MDP $\mathcal{M}$ with transition probability function $p$ is called $W_\infty$-continuous if the mapping $(s, a)\mapsto p(s, a, \cdot)$ which maps a state-action pair in a metric space to a distribution over states is continuous in $W_\infty$ metric.
\end{definition}

\begin{theorem}\label{theorem:1}
Assume that $\pi_\theta$ is a parametrized policy for an MDP $\mathcal{M}$. If both $\pi_\theta$ and $\mathcal{M}$ are $W_\infty$-continuous, then a continuous change of policy parameters $\theta$ results in a continuous deformation of the induced random trajectory in the $W_\infty$ metric.
 However, continuous deformations of the trajectories do not ensure continuous changes of their corresponding policy parameters.
\end{theorem}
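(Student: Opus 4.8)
The plan is to prove the two assertions separately: the forward implication by an inductive coupling argument built directly on the $W_\infty$ definitions, and the failure of the converse by exhibiting non-injectivity of the parameter-to-trajectory map. For the \textbf{forward direction} I would work with the coupling characterization of the Wasserstein-$\infty$ distance, $W_\infty(\mu,\nu) = \inf_{\gamma \in \Gamma(\mu,\nu)} \operatorname*{ess\,sup}_{(x,y)\sim\gamma} d(x,y)$, so that bounding $W_\infty(\mu_{\theta_1},\mu_{\theta_2})$ between the two induced trajectory distributions reduces to constructing a single good coupling of the two trajectory-generating processes.

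The coupling is built step by step. First I would couple the initial states on the diagonal (they share $\rho_0$, so the coupled pair is identical, at distance $0$). Then, given a coupled state pair $(s_\tau^1, s_\tau^2)$, I draw the actions from a $W_\infty$-optimal coupling of $\pi_{\theta_1}(s_\tau^1,\cdot)$ and $\pi_{\theta_2}(s_\tau^2,\cdot)$, and draw the successor states from a $W_\infty$-optimal coupling of $p(s_\tau^1,a_\tau^1,\cdot)$ and $p(s_\tau^2,a_\tau^2,\cdot)$. Stitching these per-step couplings into a coupling of whole trajectories via the \emph{gluing lemma}, the almost-sure trajectory distance is controlled by the accumulated per-step gaps. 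A triangle-inequality split $W_\infty(\pi_{\theta_1}(s^1,\cdot),\pi_{\theta_2}(s^2,\cdot)) \le W_\infty(\pi_{\theta_1}(s^1,\cdot),\pi_{\theta_1}(s^2,\cdot)) + W_\infty(\pi_{\theta_1}(s^2,\cdot),\pi_{\theta_2}(s^2,\cdot))$ bounds the action gap using policy continuity in $\theta$ (the stated hypothesis) together with continuity in the state, and the successor-state gap is then bounded by $W_\infty$-continuity of the transition kernel $p$ in $(s,a)$. Sending $\theta_2 \to \theta_1$ drives every per-step gap to $0$, hence $W_\infty(\mu_{\theta_1},\mu_{\theta_2}) \to 0$, which is exactly $W_\infty$-continuity of the induced trajectory distribution.

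The \textbf{main obstacle} I anticipate is controlling the accumulation of per-step errors across the horizon, since each coupling step can amplify the state discrepancy and the naive bound grows geometrically over a long or infinite horizon. I would handle this with the discounted trajectory metric (weighting step $\tau$ by $\gamma^\tau$): choosing a truncation time $T$ so that the discounted tail falls below $\epsilon/2$ regardless of the realized states, I then only need the first $T$ per-step gaps small, which holds for a fixed $T$ once $\theta_2$ is close enough to $\theta_1$. Two secondary technical points deserve explicit statement: measurable selection of the per-step optimal couplings (so the gluing lemma applies), and the fact that the stated policy-continuity definition must be read as joint continuity in $(\theta,s)$ for the triangle-inequality split to close.

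For the \textbf{converse} I would show that $\theta \mapsto \mu_\theta$ is not injective, which immediately rules out recovering parameters continuously (indeed uniquely) from trajectories. A concrete witness: softmax policies are invariant under adding a constant to all logits, and neural-network policies are invariant under permutations of hidden units, so a whole continuum (resp. discrete family) of distinct parameters induces identical policies and hence identical $\mu_\theta$. Given any $\theta_a \neq \theta_b$ with $\mu_{\theta_a} = \mu_{\theta_b}$, the parameter path equal to $\theta_a$ for $t < \tfrac12$ and $\theta_b$ for $t \ge \tfrac12$ is discontinuous, yet its induced trajectory path $t \mapsto \mu_{\theta_t}$ is constant and thus trivially a continuous deformation. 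Hence a continuous deformation of trajectories does not pin down a continuous change of the underlying parameters, establishing the second assertion.
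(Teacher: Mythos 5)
Your proof is correct and follows essentially the same route as the paper's: an induction over timesteps that propagates $W_\infty$-continuity through the policy and then the transition kernel via optimal couplings (the paper packages your per-step coupling plus triangle-inequality split as a standalone composition lemma and applies it twice per step), together with a non-injectivity argument for the converse. If anything, your write-up is more careful on three points the paper glosses over --- the joint $(\theta,s)$-continuity needed to close the triangle inequality, the gluing/measurable-selection step, and the discounted truncation needed to pass from finite prefixes to the infinite-horizon trajectory distribution --- and your explicit softmax-shift/permutation witnesses make the converse concrete where the paper merely asserts that distinct parameters can induce the same trajectory.
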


Note that the theorem also applies to deterministic policies. For deterministic policies $W_\infty$-continuity is the same as the classical notion of continuity. Theorem~\ref{theorem:1} bridges the idea of changes in policy parameters with trajectory deformation. To use this theorem, we need assumptions on the learning rate $\alpha$ and bounds on the gradients.
Specifically for any $\theta_1$ and $\theta_2$ induced by policies in two different homotopy classes, we need to assume: $\lVert \theta_1-\theta_2 \rVert>\alpha \max(\nabla_{\theta}\ell^t(\theta)|_{\theta_1}, \nabla_{\theta}\ell^t(\theta)|_{\theta_2})$. With such small enough learning rate $\alpha$, fine-tuning will always induce trajectories that visit barrier states, $ \mathbf{S}_b$. 

\begin{remark}\label{remark:2}
Intuitively, the conclusion we should reach from Theorem~\ref{theorem:1} is that fine-tuning model parameters across homotopy classes is more difficult or even infeasible in terms of number of interaction steps in the environment compared to fine-tuning within the same homotopy class; this is under the assumptions that the transition probability function and policy of $\mathcal{M}$ are $W_\infty$-continuous, learning rate is sufficiently small, and gradients are bounded \footnote{Modern optimizers and large step sizes can help evade local minima but risk making training unstable when step sizes are too large.}.
\end{remark}
 
\section{\ours\ Approach}
\label{sec:approach}
Our insight is that even though there are states with large negative rewards that make fine-tuning difficult across homotopy classes, there is still useful information that can be transferred across homotopy classes. Specifically, we first \emph{ease in} or \emph{relax} the problem by removing the negative reward associated with barriers, which enables the agent to focus on fine-tuning towards target reward without worrying about large negative rewards. We then \emph{ease out} by gradually reintroducing the negative reward via a \emph{curriculum}. We assume the environment is alterable in order to remove and re-introduce barrier states. In most cases, this requires access to a simulator, which is a common assumption in many robotics applications~\cite{dosovitskiy2017carla,erickson2020assistivegym,brockman2016openai,shen2020igibson}. We assume that during the relaxing stage as well as each subsequent curriculum stage, we are able to converge to an approximately optimal policy for that stage using reinforcement learning.

\begin{figure*}
    \centering
    \includegraphics[height=25mm]{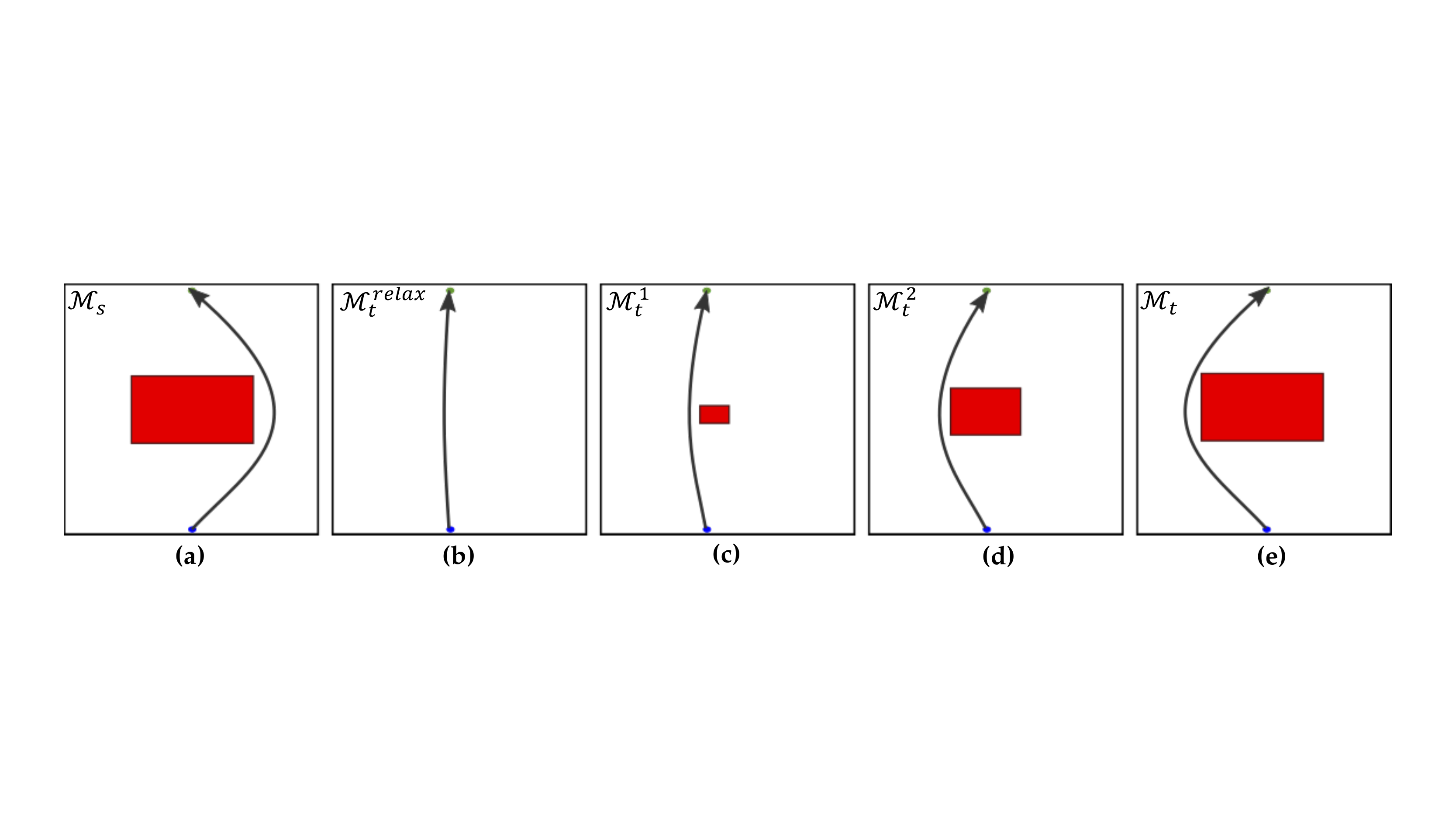}
    \caption{\ours \ approach for the single barrier set case. The red represents the negative reward associated with the barrier. \textbf{(a)} Source task. \textbf{(b)} Relaxing stage. The resulting policy produces trajectories that lean toward the left but remain fairly centered. 
    \textbf{(c)-(e)} Curriculum learning stage with $K=3$. We introduce larger subsets of the barrier states set $\mathbf{S}_b$ and fine-tune. This results in trajectories that are slowly pushed towards the left.}
    \label{fig:alg}
    \vspace{-10pt}
\end{figure*}

\prg{Ease In: Relaxing Stage}
In the relaxing stage, we remove the barrier penalty in the reward function, i.e., $\forall s \in \mathbf{S}_b, \mathcal{R}^{\text{relax}}_t(s,a) = \mathcal{R}'(s,a)$. We denote the target MDP with relaxed reward function as $\mathcal{M}^{\text{relax}}_t$. Note that we do not physically remove the barriers, so the transition function does not change. We start from $\pi_s^*$ and train the policy in $\mathcal{M}_t^{\text{relax}}$ to obtain $\pi^*_{\text{relax}}$. The relaxation removes large losses incurred by the barriers, making fine-tuning much easier than na\"ive fine-tuning.

\prg{Ease Out: Curriculum Learning Stage}
The relaxing stage finds an optimal policy $\pi_{\text{relax}}^*$ for $\mathcal{M}^{\text{relax}}_{t}$. We now need to learn the optimal policy for original target MDP $\mathcal{M}_t$ that actually penalizes barrier states with a large penalty $-M$. We develop two curricula to gradually introduce this penalty. 

\noindent\textit{(1) Reward Weight (general case)}. We design a general curriculum that can be used for any environments by gradually increasing the penalty from 0 to $M$ using a series of values $\alpha_1,\dots,\alpha_K$ satisfying $0<\alpha_1<\alpha_2<\cdots<\alpha_K=1$. We redefine our reward function to include intermediary values:
\begin{equation}
\begin{small}
    \mathcal{R}^{\text{cur}}(s,a;\alpha_k) = 
    \begin{cases} 
       \mathcal{R}'(s,a) - \alpha_k M & s \in \mathbf{S}_b \\
       \mathcal{R}'(s,a) & s \not\in  \mathbf{S}_b
   \end{cases}
  \end{small}
\end{equation}
This allows us to define a sequence of corresponding tasks $\mathcal{M}_t^0, \dots, \mathcal{M}_t^K$ where $\mathcal{M}_t^0 \equiv \mathcal{M}^{\text{relax}}_t$ and $\mathcal{M}_t^K \equiv \mathcal{M}_t$. For each new task $\mathcal{M}_t^k$, we initialize the policy with the previous task's optimal policy $\pi_{\theta_{k-1}}^*$ and train it using the reward $\mathcal{R}^{\text{cur}}(s,a; \alpha_k)$. The detailed algorithm is shown in Algorithm 1 in Section III of the supplementary materials.

\noindent\textit{(2) Barrier Set Size.}
When there is only a single barrier set $\mathbf{S}_b$ (i.e., $\mathbf{S}_b$ is connected), we can also build a curriculum around the set itself. Here, we keep the $-M$ penalty but gradually increase the set of states that incur this penalty. We can guarantee that our algorithm always converges as we discuss in our analysis section below. 

To build a curriculum, we can choose any state $s \in \mathbf{S}_b$ as our initial set and gradually inflate this set to $\mathbf{S}_b$ by connecting more and more states together \footnote{A connected path is defined differently for continuous and discrete state spaces. For example, in continuous state spaces, a connected path means a continuous path.}. For example, we can connect new states that are within some radius of the current set. This allows us to define a series of connected sets $\mathbf{S}_{b_1},\dots,\mathbf{S}_{b_K}$ satisfying $\emptyset \subset \mathbf{S}_{b_1}\subset \mathbf{S}_{b_2} \subset \dots \subset \mathbf{S}_{b_K} = \mathbf{S}_b$.
We can then similarly redefine our reward function and parameterize it by including intermediary barrier sets  $\mathbf{S}_{b_k}$
\begin{equation}
\begin{small}
    \mathcal{R}^{\text{cur}}(s,a;\mathbf{S}_{b_k}) = 
    \begin{cases} 
       \mathcal{R}'(s,a) -  M & s \in \mathbf{S}_{b_k} \\
       \mathcal{R}'(s,a)      & s \not\in  \mathbf{S}_{b_k}
   \end{cases}
  \end{small}
\end{equation}
Note that the sets $\mathbf{S}_{b_k}$ only change the reward associated with the states, not the dynamics.

Curriculum learning by evolving barrier set size is more interpretable and controllable
than the general reward weight approach since for each task $\mathcal{M}_t^{k}$, an agent learns a policy that avoids a subset of states, $\mathbf{S}_{b_k}$. In the general reward weight approach, it is unclear which states the resulting policy will never visit. A shortcoming of the barrier set size approach is that the convergence guarantee is limited to single barriers because if we have multiple barriers, we may not find a initial set $\mathbf{S}_{b_1}$ as described in Lemma~\ref{lemma:x_1}. The algorithm for the barrier set approach follows the same structure as Algorithm 1.

\prg{Analysis}\label{sec:convergence}
For both curriculum learning by reward weight and barrier set size, if the agent can successfully find an optimal policy at every intermediary task, then we
can find $\pi_t^*$ for $\mathcal{M}_t$. For the reward weight approach, we cannot prove that at every stage $k$, the optimal policy for $\mathcal{M}_t^{k}$ is guaranteed to be obtained, but we can still have the following proposition:
\begin{proposition}
For curriculum learning by reward weight, in every stage, the learned policy achieves a higher reward than the initialized policy evaluated on the final target task.
\end{proposition}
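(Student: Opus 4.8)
The plan is to exploit the fact that the curriculum reward is affine in the weight $\alpha_k$, so that the expected return splits cleanly into a barrier-independent part and a barrier-penalty part. For any policy $\pi$, write its expected return on the task with weight $\alpha$ as $\eta^{\alpha}_{\pi} = \bar{R}_{\pi} - \alpha \bar{B}_{\pi}$, where $\bar{R}_{\pi} = \mathbb{E}_{\xi\sim\pi}[\sum_{\tau}\gamma^{\tau}\mathcal{R}'(s_{\tau},a_{\tau})]$ is the discounted return ignoring the barrier and $\bar{B}_{\pi} = M\,\mathbb{E}_{\xi\sim\pi}[\sum_{\tau:\, s_{\tau}\in\mathbf{S}_b}\gamma^{\tau}] \ge 0$ is the expected discounted barrier penalty at full weight. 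Crucially, $\bar{B}_{\pi}$ does not depend on $\alpha$, since the dynamics are unchanged across curriculum stages and only the scalar multiplier in front of it varies. The final target task is $\alpha=1$, so the quantity I must compare is $\eta^{1}_{\pi_{\theta_k}^*}$ against $\eta^{1}_{\pi_{\theta_{k-1}}^*}$, where $\pi_{\theta_k}^*$ is the policy learned in stage $k$ and $\pi_{\theta_{k-1}}^*$ is the one used to initialize it.

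First I would record the two optimality facts granted by the convergence assumption: since $\pi_{\theta_k}^*$ is optimal for stage $k$ (weight $\alpha_k$) and $\pi_{\theta_{k-1}}^*$ lies in the same policy class (it is the initialization), we have $\eta^{\alpha_k}_{\pi_{\theta_k}^*} \ge \eta^{\alpha_k}_{\pi_{\theta_{k-1}}^*}$; symmetrically, $\eta^{\alpha_{k-1}}_{\pi_{\theta_{k-1}}^*} \ge \eta^{\alpha_{k-1}}_{\pi_{\theta_k}^*}$. The key intermediate step is a monotonicity claim: the optimal barrier penalty is non-increasing along the curriculum, $\bar{B}_{\pi_{\theta_k}^*} \le \bar{B}_{\pi_{\theta_{k-1}}^*}$. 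I would obtain this by a standard exchange argument: substituting the affine decomposition into the two optimality inequalities and adding them cancels the $\bar{R}$ terms and leaves $(\alpha_k - \alpha_{k-1})(\bar{B}_{\pi_{\theta_{k-1}}^*} - \bar{B}_{\pi_{\theta_k}^*}) \ge 0$, so $\alpha_k > \alpha_{k-1}$ forces the penalty monotonicity.

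Finally I would assemble the result by rewriting the target-task return of any policy as $\eta^{1}_{\pi} = \eta^{\alpha_k}_{\pi} - (1-\alpha_k)\bar{B}_{\pi}$ and subtracting:
\begin{equation*}
\eta^{1}_{\pi_{\theta_k}^*} - \eta^{1}_{\pi_{\theta_{k-1}}^*}
= \big(\eta^{\alpha_k}_{\pi_{\theta_k}^*} - \eta^{\alpha_k}_{\pi_{\theta_{k-1}}^*}\big)
+ (1-\alpha_k)\big(\bar{B}_{\pi_{\theta_{k-1}}^*} - \bar{B}_{\pi_{\theta_k}^*}\big).
\end{equation*}
The first bracket is non-negative by optimality at weight $\alpha_k$, and the second is non-negative because $1-\alpha_k \ge 0$ together with the penalty monotonicity just established; hence the difference is non-negative, which is exactly the claim.

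I expect the monotonicity step to be the main obstacle, both because it is the only place that genuinely uses structure—the affine dependence on $\alpha$ combined with optimality at two different weights—and because one must be careful that \emph{optimal} is interpreted within the fixed parametrized policy class, so that both exchange inequalities are legitimate; everything else is bookkeeping with the decomposition. I would also flag that the argument yields only a weak (non-strict) inequality, and that obtaining strict improvement would require one of the optimality or monotonicity inequalities to be strict, which need not hold in general.
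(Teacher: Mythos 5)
Your proof is correct and follows essentially the same route as the paper's: the affine decomposition of the stage-$k$ return in $\alpha_k$, the two optimality inequalities at weights $\alpha_{k-1}$ and $\alpha_k$, the exchange argument showing the barrier penalty is non-increasing across stages, and the final recombination via $\eta^{1}_{\pi} = \eta^{\alpha_k}_{\pi} - (1-\alpha_k)\bar{B}_{\pi}$. Your observation that the argument only yields a weak inequality is a fair and accurate caveat that the paper's statement glosses over.
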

Though the reward weight approach is not guaranteed to achieve the optimal policy in every curriculum step, the policy improves with respect to the final target reward. Each curriculum step is much easier than the original direct fine-tuning problem, which increases the possibility for successful fine-tuning. For the barrier set size approach, we prove that in every stage, the optimal policy for each stage is achievable. To learn an optimal policy in each stage, finding $\mathbf{S}_{b_1}$ is key:
\begin{lemma}\label{lemma:x_1}
There exists $\mathbf{S}_{b_1}$ that divides the trajectories of $\pi^*_{s}$ and $\pi^*_{\text{relax}}$ into two homotopy classes.
\end{lemma}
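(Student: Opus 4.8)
The plan is to turn the statement into a planar separation argument about the two optimal trajectories and then exploit connectedness of $\mathbf{S}_b$. First I would fix deterministic representative paths $\xi_s^*$ and $\xi_{\text{relax}}^*$ for $\pi_s^*$ and $\pi_{\text{relax}}^*$ (taking the modes/supports of the induced distributions, so that the classical notion of homotopy applies, as noted after Theorem~\ref{theorem:1}); both join $\mathbf{S}_i$ to $\mathbf{S}_g$. Since $\pi_{\text{relax}}^*$ is optimal for the relaxed target reward, it strictly improves on $\pi_s^*$ there and is pulled toward the target side, so $\xi_{\text{relax}}^* \neq \xi_s^*$. Concatenating $\xi_s^*$ with the reversal of $\xi_{\text{relax}}^*$ and closing through their common endpoints $s_i \in \mathbf{S}_i$, $s_g \in \mathbf{S}_g$ yields a loop $\Gamma$ that, by the Jordan curve theorem, bounds a region $R$. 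I would then reduce the lemma to the single claim that $\mathbf{S}_b \cap R \neq \emptyset$: a connected piece of the barrier is trapped strictly between the two trajectories.

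The crux is establishing $\mathbf{S}_b \cap R \neq \emptyset$, which I would prove by contradiction. If no barrier state lay inside $\Gamma$, then $\xi_s^*$ could be swept across $R$ onto $\xi_{\text{relax}}^*$ without ever meeting $\mathbf{S}_b$, so $\xi_s^*$ and $\xi_{\text{relax}}^*$ would be homotopic relative to the full barrier. Because the relaxed optimum sits on the target side and is therefore homotopic to $\xi_t^*$, transitivity would place $\xi_s^*$ and $\xi_t^*$ in the same homotopy class, contradicting the standing assumption that the source and target trajectories are separated by $\mathbf{S}_b$. Hence part of $\mathbf{S}_b$ must lie in the interior of $R$.

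Given barrier states inside $R$, I would take $\mathbf{S}_{b_1}$ to be a connected sub-arc of $\mathbf{S}_b \cap \overline{R}$ that spans $R$ from the $\xi_s^*$ boundary to the $\xi_{\text{relax}}^*$ boundary. Such an arc exists precisely because $\mathbf{S}_b$ is a single connected set, and it can be inflated monotonically back up to $\mathbf{S}_b$ to supply the remaining sets $\mathbf{S}_{b_2} \subset \dots \subset \mathbf{S}_{b_K}=\mathbf{S}_b$ of the curriculum. Any homotopy carrying $\xi_s^*$ to $\xi_{\text{relax}}^*$ sweeps all of $R$ and so must cross $\mathbf{S}_{b_1}$; therefore no deformation avoiding $\mathbf{S}_{b_1}$ exists, and $\xi_s^*$ and $\xi_{\text{relax}}^*$ lie in two distinct homotopy classes with respect to $\mathbf{S}_{b_1}$, which is the claim.

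I expect the two load-bearing steps to be the spanning/connectedness extraction and the assertion that $\xi_{\text{relax}}^*$ has genuinely crossed to the target side. Merely knowing $\mathbf{S}_b \cap R$ is nonempty does not automatically yield a connected subset that \emph{separates} rather than merely pokes into $R$, and it is exactly here that the single-connected-barrier hypothesis is indispensable: with several disjoint barriers the source and target may be separated only by their union, so no single inflatable connected subset need exist, which is why the paper limits this guarantee to a single $\mathbf{S}_b$. The claim that the relaxed optimum is on the target side I would ground in the reward geometry (the high-reward region defining $\mathbf{S}_g$ lies on the target side), and the reduction from stochastic policies and $W_\infty$-deformation to classical homotopy I would handle by working with the supports of the induced trajectory distributions.
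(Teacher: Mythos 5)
Your core contradiction is the same one the paper uses---if no subset of $\mathbf{S}_b$ separates the two trajectories, then $\pi^*_s$ and $\pi^*_{\text{relax}}$ are homotopic, and one argues this forces the source and target optima into the same homotopy class, contradicting the standing assumption. But the planar machinery you wrap around it introduces real gaps. The most serious is the load-bearing assertion that $\xi^*_{\text{relax}}$ ``sits on the target side and is therefore homotopic to $\xi_t^*$,'' which you ground only in ``reward geometry.'' In the typical case---the one the paper's construction algorithm is explicitly built around---the relaxed optimum passes \emph{through} $\mathbf{S}_b$, since the relaxation removes exactly the penalty that would keep it out; such a trajectory is not in any homotopy class with respect to the full barrier, so ``homotopic to $\xi_t^*$'' is not even well-defined and transitivity cannot be invoked. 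The paper's proof avoids this by a different inference: if no separating subset exists then in particular $\mathbf{S}_b$ itself does not separate them, which forces $\xi^*_{\text{relax}}$ to avoid $\mathbf{S}_b$ entirely; but a trajectory that optimizes $\mathcal{R}'$ while incurring no barrier penalty is already optimal for $\mathcal{R}_t$, so it \emph{is} the target optimum, and the contradiction follows without any claim about which ``side'' the relaxed optimum lies on.

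Two further problems. First, the Jordan curve theorem is strictly planar and requires the concatenated loop $\Gamma$ to be simple; the lemma is invoked for Fetch Reach in $\mathbb{R}^3$ and for joint-angle spaces (Mujoco Ant, assistive feeding), where two arcs sharing endpoints bound no region and ``a connected sub-arc spanning $R$'' has no meaning, and even in 2D the two optimal trajectories may cross each other, so $\Gamma$ need not be simple. Second, your extracted $\mathbf{S}_{b_1}$ is chosen to reach ``the $\xi^*_{\text{relax}}$ boundary'' of $R$; a barrier set that touches $\xi^*_{\text{relax}}$ is one that $\xi^*_{\text{relax}}$ collides with, so the two trajectories would not lie in two well-defined homotopy classes with respect to it, which is exactly the property the lemma asserts (and which the paper's Algorithm 2 explicitly checks for with its collision oracle before accepting a candidate $\mathbf{S}_{b_1}$). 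The abstract contradiction argument the paper gives needs none of this apparatus and holds in general state spaces; if you want the constructive planar picture, it belongs to the algorithm for \emph{finding} $\mathbf{S}_{b_1}$, not to the existence proof.
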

We propose an approach for finding $\mathbf{S}_{b_1}$ in Algorithm 2 in Section III of the supplementary materials.
 
\begin{proposition}\label{theorem:cur}
A curriculum starting with $\mathbf{S}_{b_1}$ as described in Lemma \ref{lemma:x_1} and inflating to $\mathbf{S}_b$ with sufficiently small changes in each step, i.e., small enough 
for reinforcement learning to find trajectories that should not visit barrier states, can always learn the optimal policy $\pi_t^*$ for the final target reward.
\end{proposition}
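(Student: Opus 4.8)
The plan is to prove the claim by induction on the curriculum stage $k$, carrying the invariant that the optimal policy $\pi^*_{\theta_k}$ of task $\mathcal{M}_t^k$ (with barrier set $\mathbf{S}_{b_k}$) induces trajectories that avoid $\mathbf{S}_{b_k}$ and lie in the target homotopy class $\mathcal{H}_t$, i.e.\ the class of $\xi_t^*$ and of the relaxed optimum $\pi^*_{\text{relax}}$. For the base case ($k=1$) I would invoke Lemma~\ref{lemma:x_1}: since $\mathbf{S}_{b_1}$ separates the trajectories of $\pi^*_s$ and $\pi^*_{\text{relax}}$ into two homotopy classes, $\pi^*_{\text{relax}}$ already avoids $\mathbf{S}_{b_1}$ and sits on the target side. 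Because the stage-$1$ reward only subtracts $-M$ on $\mathbf{S}_{b_1}$, which $\pi^*_{\text{relax}}$ does not visit, $\pi^*_{\text{relax}}$ attains the same return as under the fully relaxed reward $\mathcal{R}'$ and is therefore already optimal for $\mathcal{M}_t^1$; the invariant holds at $k=1$ with no fine-tuning needed.

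For the inductive step I would establish two things when inflating $\mathbf{S}_{b_k}$ to $\mathbf{S}_{b_{k+1}}$ by a sufficiently small set of states. First, the stage-$(k+1)$ optimum stays in $\mathcal{H}_t$: any trajectory on the source side must cross $\mathbf{S}_{b_k}\subseteq\mathbf{S}_{b_{k+1}}$, incurring the $-M$ penalty, and, being in the opposite homotopy class from $\pi^*_{\text{relax}}$, is bounded away from the relaxed optimum and hence attains strictly lower return under $\mathcal{R}'$; so the return-maximizing trajectory that merely skirts the slightly enlarged barrier on the target side dominates. Second, this new optimum is reachable from $\pi^*_{\theta_k}$ by local fine-tuning: because only the thin sliver $\mathbf{S}_{b_{k+1}}\setminus\mathbf{S}_{b_k}$ is newly penalized, the trajectory of $\pi^*_{\theta_k}$ needs only a small $W_\infty$ deformation to avoid it, a deformation that stays inside $\mathcal{H}_t$ and never crosses the full barrier. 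By the forward direction of Theorem~\ref{theorem:1}, moving the parameters continuously deforms the trajectory continuously, so there is a short parameter path from $\theta_k$ toward $\theta_{k+1}$ along which the induced trajectory stays on the target side and the loss carries no $-M$ hump of the kind identified in Remark~\ref{remark:2}. Each stage is thus a within-homotopy fine-tuning problem, which gradient-based RL solves under the paper's per-stage convergence assumption.

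Chaining the induction from $k=1$ to $k=K$, where $\mathbf{S}_{b_K}=\mathbf{S}_b$ and hence $\mathcal{M}_t^K\equiv\mathcal{M}_t$, produces a policy that is optimal for the true target reward and whose trajectory lies in $\mathcal{H}_t$; since $\xi_t^*$ is by construction the optimal target trajectory in $\mathcal{H}_t$, this policy is precisely $\pi_t^*$, which proves the claim.

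I expect the crux to be making ``sufficiently small changes in each step'' quantitative and, relatedly, ruling out a discontinuous jump of the per-stage optimum to the source side as the barrier grows. I would handle this with a continuity-and-compactness argument on the compact state space: away from homotopy-class boundaries, the map from barrier set (in Hausdorff distance) to optimal trajectory (in $W_\infty$) is uniformly continuous, yielding a uniform inflation bound $\delta$ under which consecutive optima share a homotopy class and lie in a common barrier-free neighborhood in parameter space. The genuinely delicate point is the reachability half of the inductive step: Theorem~\ref{theorem:1} warns that trajectory deformations need not lift to continuous parameter changes, so I must argue that the specific small deformation required here does admit such a lift --- and it is exactly the anchoring of the whole curriculum to $\pi^*_{\text{relax}}$ via Lemma~\ref{lemma:x_1} that prevents the optimum from ever needing to migrate across the barrier, keeping each fine-tuning step within a single homotopy class.
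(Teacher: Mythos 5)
Your overall architecture---induction over curriculum stages with the invariant that the stage-$k$ policy avoids $\mathbf{S}_{b_k}$ and lies in the target homotopy class, a base case anchored by Lemma~\ref{lemma:x_1}, and identification of the stage-$K$ policy with $\pi_t^*$---is the same two-step skeleton the paper uses (the paper additionally reduces the many-class case to two classes by ordering the connected components of $\mathbf{S}_b$, which you silently assume away).

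The gap is in the first half of your inductive step: the claim that the stage-$(k{+}1)$ \emph{optimum} lies in $\mathcal{H}_t$ because source-side trajectories are dominated. Your stated reason---``any trajectory on the source side must cross $\mathbf{S}_{b_k}$, incurring the $-M$ penalty''---is false: a trajectory in the source homotopy class avoids the barrier by passing around it on the other side; that is precisely what being in a different homotopy class means. Your fallback, that such a trajectory is bounded away from $\pi^*_{\text{relax}}$ in $W_\infty$ and hence attains strictly lower return under $\mathcal{R}'$, does not establish dominance either: the relevant comparison is not with $\pi^*_{\text{relax}}$ (which itself passes through $\mathbf{S}_b$ and is therefore also infeasible at stage $k{+}1$) but with the best \emph{target-side} trajectory avoiding $\mathbf{S}_{b_{k+1}}$, and being far from a maximizer in $W_\infty$ does not bound a return from above. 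For intermediate $k$ the best source-side trajectory avoiding $\mathbf{S}_{b_k}$ can outscore the best target-side one (e.g., when the inflation order leaves the source side nearly unconstrained while already pinching the target side), so the per-stage \emph{global} optimum need not sit in $\mathcal{H}_t$ and your return-comparison cannot be repaired without extra assumptions. The paper instead argues about the policy actually \emph{reached}: $\pi^*_{k+1}$ is obtained by fine-tuning from $\pi^*_{k}$, and the enlarged barrier separating $\pi^*_{k}$ from $\pi^*_s$ blocks any continuous deformation back to the source class (Theorem~\ref{theorem:1} and Remark~\ref{remark:2}), while the smallness of each inflation guarantees a nearby barrier-avoiding trajectory is findable. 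Your reachability half of the inductive step already invokes exactly this mechanism; you should use it for the class-membership half as well.
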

\section{Experiments}
\label{sec:experiments}
We evaluate our approach on four axes of complexity:
(1) the size of barrier,
(2) the number of barriers,
(3) barriers in 3D environments, and
(4) barriers that are not represented by physical obstacles but by a set of `undesirable' states.

To evaluate these axes, we use various domains including navigation (Figs.~\ref{fig:navigation1}, \ref{fig:navigation2}), lunar lander (Fig.~\ref{fig:lander_fetch} Left), fetch reach (Fig.~\ref{fig:lander_fetch} Right), mujoco ant (Fig.~\ref{fig:ant}), and assistive feeding task (Fig.~\ref{fig:feeding}). We compare our approach against na\"ive fine-tuning (Fine-tune) as well as three state-of-the-art fine-tuning approaches: Progressive Neural Networks (PNN)~\cite{rusu2016progressive}, Batch Spectral Shrinkage (BSS)~\cite{chen2019catastrophic}, and $L^2$-SP~\cite{xuhong2018explicit}. We also add training on the target task from a random initialization (Random) as a reference, but we do not consider Random as a comparable baseline because it is not a transfer learning algorithm.
We evaluate all the experiments using the total number of interaction steps it takes to reach within some small distance of the desired return in the target task. We report the average number of interaction steps over in units of 1000 (lower is better). We indicate statistically significant differences ($p < 0.05$) with baselines by listing the first letter of those baselines.  We ran Navigation (barrier sizes), and Fetch Reach experiments with 5 random seeds and the rest with 10 random seeds. If more than half of the runs exceeded the maximum number of interaction steps without reaching the desired target task reward, we report that the task is unachievable with the maximum number of interaction steps. Finally, we use stochastic policies which is why our source and target policies may not be symmetrical.
Experiment details are in Section V of the supplementary materials.

\begin{figure}[h!]
    \centering
    \includegraphics[width=35mm, height=28mm]{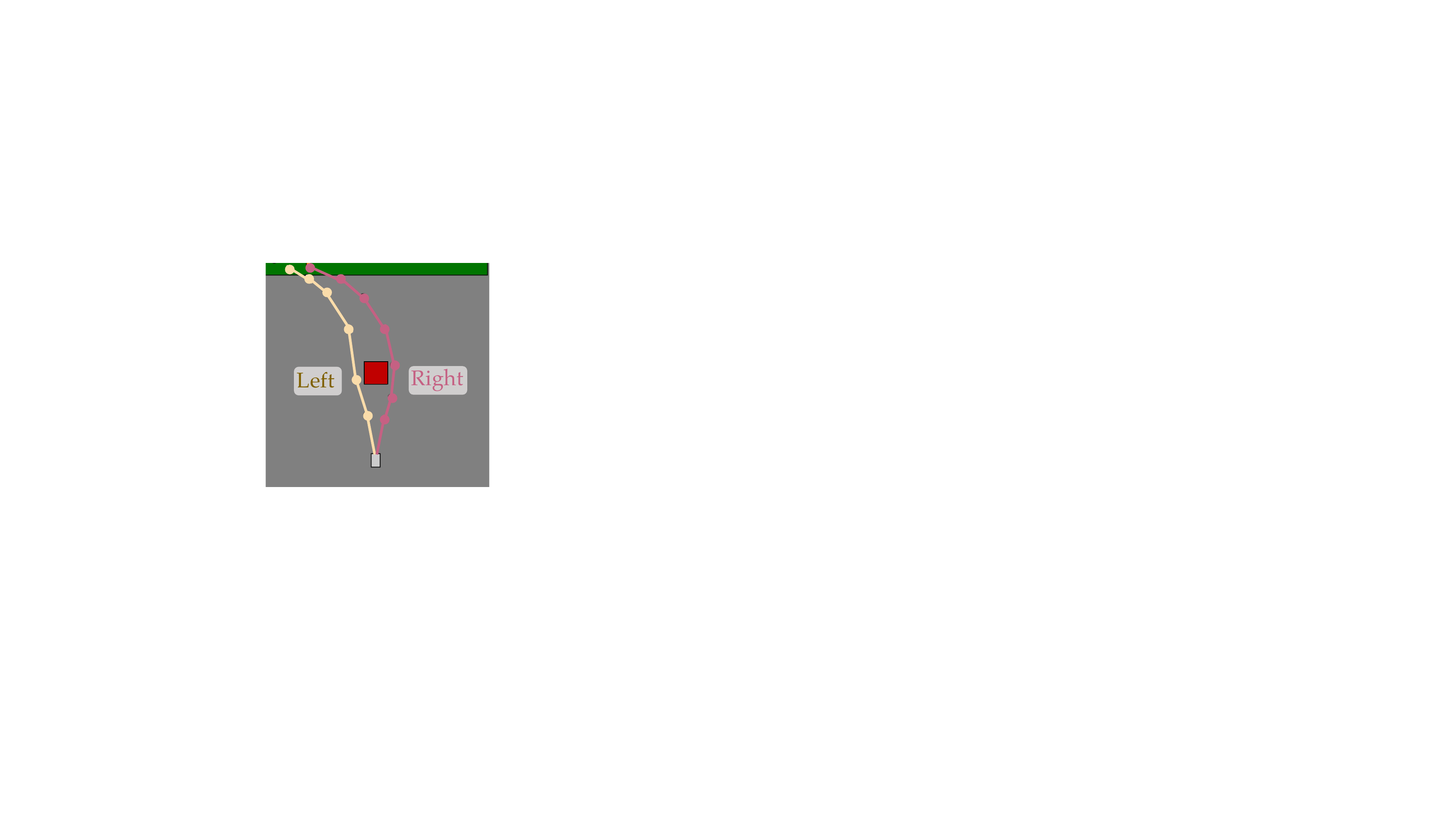}
    \caption{Navigation environment with barrier size 5.}
    \label{fig:navigation1}
    \vspace{-15pt}
\end{figure}
\begin{table}[h!]
		\centering
		\scalebox{0.8}{
    	\begin{tabular}{ c|c c c c}
    	  & \multicolumn{4}{c}{\textbf{Barrier Sizes}} \\
          & 1 & 3 & 5 & 7 \\
         \hline
        \noalign{\vskip 2pt}    
            Ours         & 117.4 $\pm$ 128.6            & 162.6 $\pm$ 70.5$^{f}$         & \textbf{102.7} $\pm$ 87.8$^{l,b}$  & \textbf{112.3} $\pm$ 111.3$^{l,b,f}$ \\
            PNN          & \textbf{92.2} $\pm$ 102           & \textbf{138.6} $\pm$ 92.1 & 159.8 $\pm$ 90.6            & 119.2 $\pm$ 125 \\
            $L^2$-SP     & 138.2 $\pm$ 61.3           & >256        & >256                        & >256    \\
            BSS          & >256          & >256         & >256                        & >256\\
            Fine-tune    & 141.1 $\pm$ 53           & >256         & 157 $\pm$ 100         & 241 $\pm$ 27.5 \\
            \noalign{\vskip 2pt}    
            \hdashline 
            \noalign{\vskip 2pt}    
            Random       & 54.6 $\pm$ 61.5   & 88.4 $\pm$ 59.4         & 145 $\pm$ 74.8          & 77.1 $\pm$ 40.6    \\
        \end{tabular}
        }
        \vspace{0.5em}
        \caption{Larger barrier sizes make fine-tuning more challenging. Our approach performs comparably with small sizes and outperforms other methods with large sizes. We only use one curriculum step, so the reward weight and the barrier set size approaches are the same.}
        \label{table:barrier_size}
\end{table}

\prg{1. Navigation}
We address the first two axes by analyzing our problem under varying barrier sizes and varying number of homotopy classes. We experiment with our running example where an agent must navigate from a fixed start position $s_i$ to the goal set $\mathbf{S}_g$ (green area). 

\noindent \emph{Varying Barrier Sizes.} We investigate how varying the size of the barrier affects the fine-tuning problem going from Right to Left. Here, we use a  one-step curriculum so the barrier set size and reward weight approaches are the same. Table~\ref{table:barrier_size} demonstrates that when barrier sizes are small (1,3), our approach is not the most sample efficient, but remains comparable to other methods. With larger barrier sizes (5, 7), we find that our method requires the least amount of training updates. \textbf{This result suggests that our approach is especially useful when barriers are large (i.e., fine-tuning is hard). When fine-tuning is easy, simpler approaches like starting from a random initialization can be used.}

\begin{figure}[h!]
    \centering
    \includegraphics[width=87mm, height=30mm]{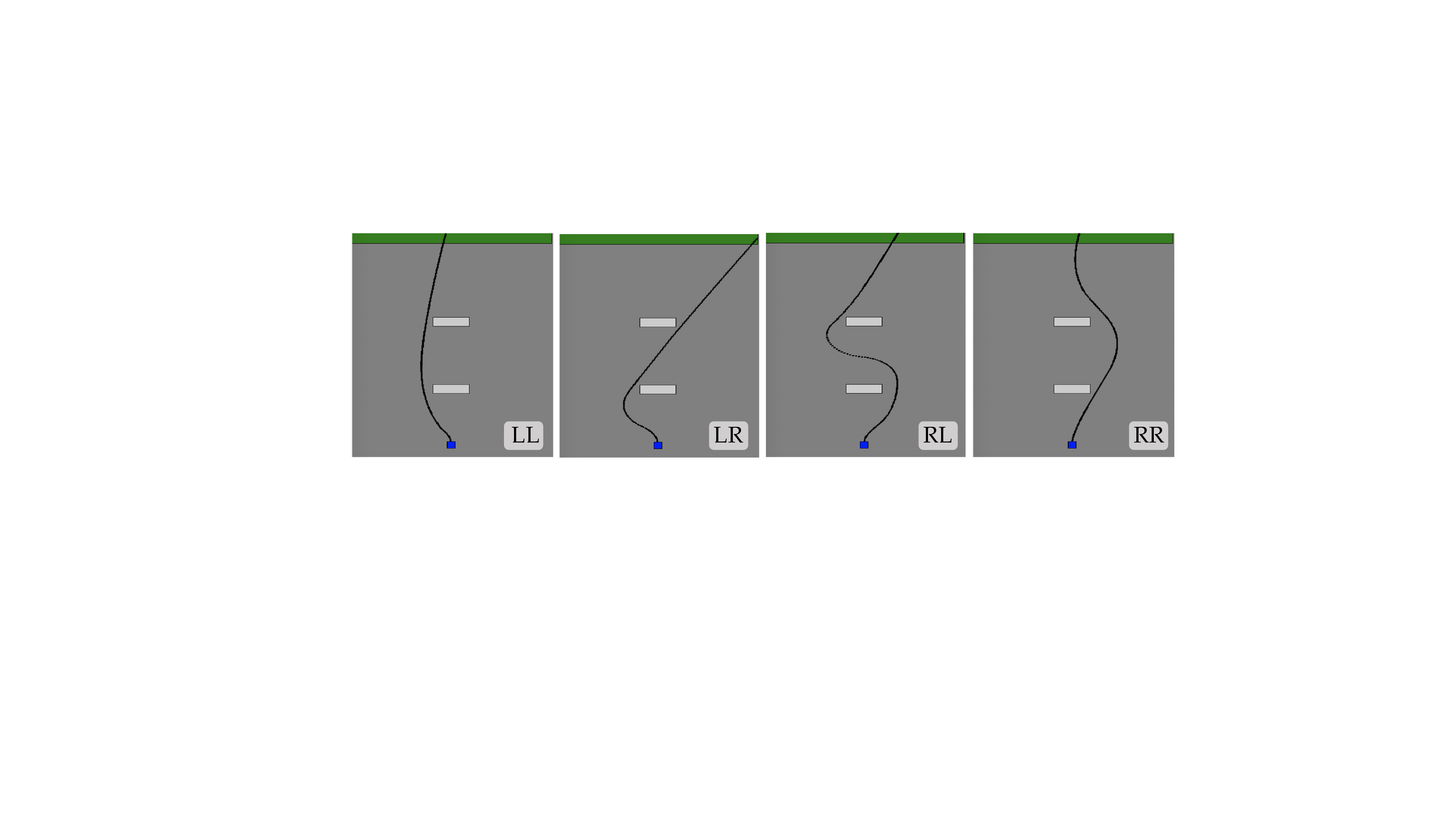}
    \caption{Navigation environment with four homotopy classes.}
    \label{fig:navigation2}
\end{figure}

\begin{table} [h!]
		\begin{center}
		\scalebox{1.}{
    	\begin{tabular}{ c|c c c }
    	  & \multicolumn{3}{c}{\textbf{Transfer Tasks}} \\
          & LL $\rightarrow$ LR & LL $\rightarrow$ RL & LL $\rightarrow$ RR \\
         \hline
        \noalign{\vskip 2pt}    
            Ours:barrier  & 88.1$\pm$3.2  & 52.1$\pm$14.2 & \textbf{48.1}$\pm$13.2$^{p,l,b,f}$  \\
            Ours:reward  & \textbf{63.2}$\pm$9.1$^{p,l,b,f}$   & \textbf{47.1}$\pm$10.9$^{p,l,b,f}$  & 56.5$\pm$9.9  \\
            PNN   & 101.9$\pm$ 37.2   & >300 & 119.2 $\pm$ 36.4          \\
            $L^2$-SP  & 130.6$\pm$ 28.6  & >300 & >300  \\
            BSS   & >300 & >300    &>300      \\
            Fine-tune   & 141.2$\pm$12.1 & >300  & >300 \\
            \noalign{\vskip 2pt}    
            \hdashline
            \noalign{\vskip 2pt}    
            Random    & 43.5$\pm$4.1       & >300 & 169.4$\pm$27.1          \\
        \end{tabular}
        }
        \end{center}
        \vspace{0.5em}
        \caption{Fine-tuning with multiple homotopy classes.}
        \label{table:navigation2}
	\vspace{-15pt}
\end{table}

\noindent \emph{Four Homotopy Classes.} We next investigate how multiple homotopy classes can affect fine-tuning. As shown in Fig.~\ref{fig:navigation2}, adding a second barrier creates four homotopy classes: \emph{LL}, \emph{LR}, \emph{RL}, and \emph{RR}. We experiment with both barrier set size and reward weight approaches and report results when using \emph{LL} as our source task in Table~\ref{table:navigation2}. Results for using \emph{LR}, \emph{RL}, and \emph{RR} as the source task are included in the supplementary materials. We can observe that the proposed Ease-In-Ease-out approach outperforms other fine-tuning methods. Having multiple barriers does not satisfy the single barrier assumption, so our reward weight approach performs better on average than the barrier set size approach. Note that in \emph{LL} $\rightarrow$ \emph{LR}, \textit{Random} performs best, which implies that the task is easy to learn from scratch and no transfer learning is needed. \textbf{We conclude that while increasing the number of barrier sets can result in a more challenging fine-tuning problem for other methods, it does not negatively affect our approach.}

\prg{2. Lunar Lander}
Before exploring 3D environments that differ significantly from the navigation environment, we conducted an experiment in Lunar Lander. The objective of the game is to land on the ground between the two flags without crashing. As shown in Fig.~\ref{fig:lander_fetch} (Left), this environment is similar to the navigation environments in that we introduce a barrier which creates two homotopy classes: Left and Right. However, the main difference is that the agent is controlled by two lateral thrusters and a main engine. 

\begin{table}[h!]
		\centering
		\scalebox{1}{
    	\begin{tabular}{ c|c c  }
    	 & \textbf{Lunar Lander}\\
          & L $\rightarrow$ R & R $\rightarrow$ L \\
         \hline
        \noalign{\vskip 2pt}    
            Ours:barrier & 80.46$\pm$46.58   & 80.23$\pm$39.76  \\
            Ours:reward  & \textbf{75.13}$\pm$34.25$^{p,b,f}$   & \textbf{38.43}$\pm$6.46$^{p,l,b,f}$  \\
            PNN   &  117.35$\pm$3.35 & 128.59$\pm$44.56\\
            $L^2$-SP  & 124.54$\pm$69.99  & 94.59$\pm$51.23 \\
            BSS   & >300          & >300          \\
            Fine-tune   & >300    & >300         \\
            \noalign{\vskip 2pt}    
            \hdashline
            \noalign{\vskip 2pt}    
            Random    & 232.32$\pm$ 48.21 & 162.92$\pm$49.54 \\
        \end{tabular}
        }
        \caption{}
        \label{table:lunar_lander}
	\caption*{Our approach outperforms baselines in the Lunar Lander domain.}
	\vspace{-12pt}
\end{table}

Results are shown in Table~\ref{table:lunar_lander}. We observe that while L$^2$-SP suffers from a large variance and PNN needs many more steps, both our reward weight approach and barrier set size approach outperforms the fine-tuning methods. The reward weight approach has a small standard deviation and performs stably. Note that Random requires large amount of interaction steps, meaning that training the landing task is originally quite difficult and needs transfer reinforcement learning. \textbf{Our approach significantly reduces the number of steps needed to learn the optimal policy in both directions.}

\prg{2. Fetch Reach} We address the third axis by evaluating our \ours\ approach on a more realistic Fetch Reach environment~\cite{brockman2016openai}. The Fetch manipulator must fine-tune across homotopy classes in $\mathbb{R}^3$. In the reaching task, the robot needs to reach either the orange or blue tables by stretching right or left respectively. The tables are separated by a wall which creates two homotopy classes, as shown in Fig.~\ref{fig:lander_fetch}. 
Our results are shown in Table~\ref{table:fetch_reach}. We find that our approach was the most efficient compared to baseline methods. One reason why the baselines did not perform well was that the wall's large size and its proximity to the robot caused it to collide often, making it particularly difficult to fine-tune across homotopy classes. We found that even training from a random initialization proved difficult. For this reason, we had to relax the barrier constraint to obtain valid Left and Right source policies.

\begin{figure}[h!]
    \centering
    \includegraphics[scale=0.62]{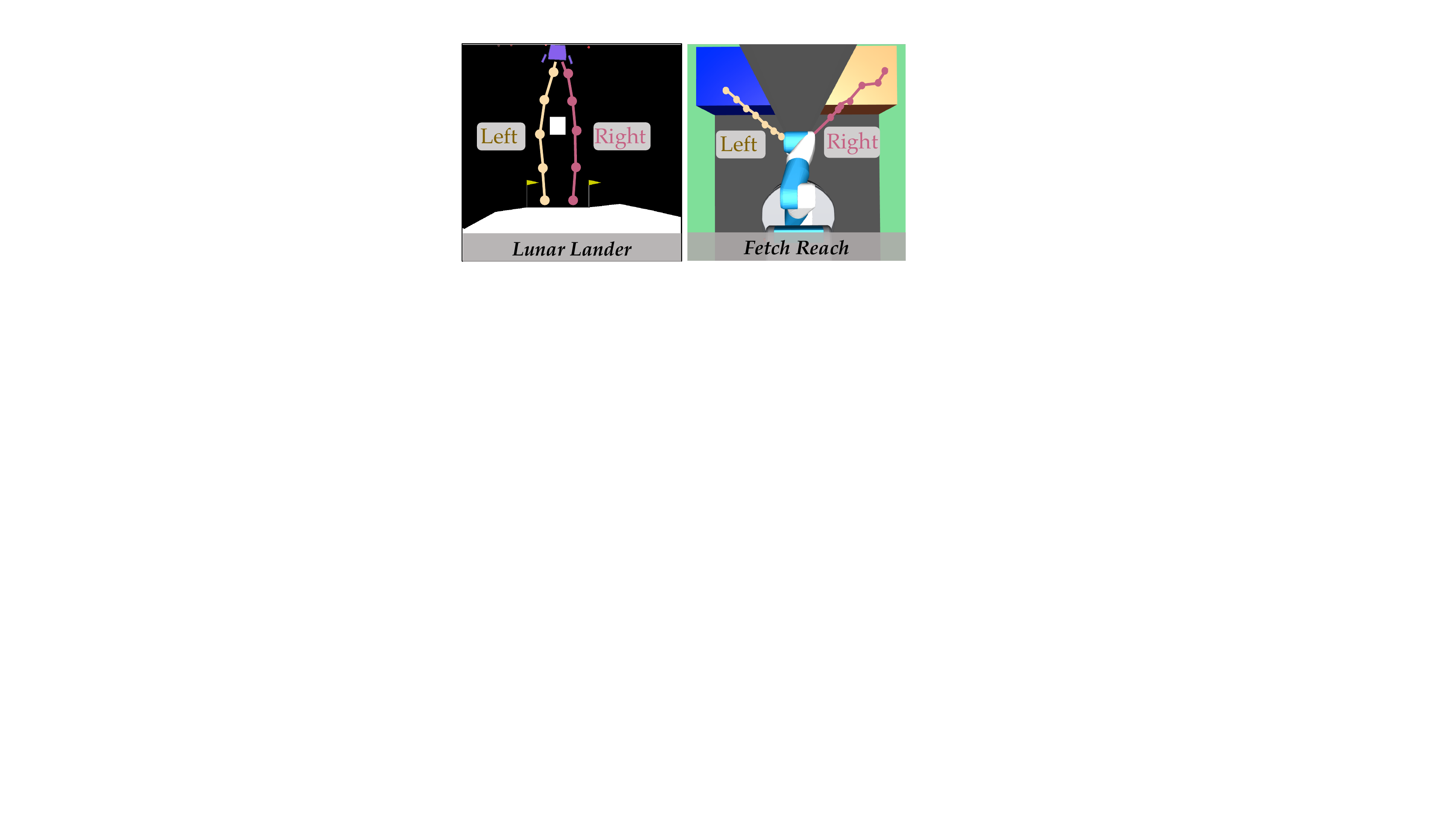}
    \caption{(Left) Lunar lander environment with two homotopy classes. (Right) Fetch reach environment. The robot must learn to reach to the right or left of the wall. }
    \label{fig:lander_fetch}
    \vspace{-15pt}
\end{figure}
\begin{table}[h!]
		\centering
		\scalebox{1.}{
    	\begin{tabular}{ c|c c }
    	 & \textbf{Fetch Reach}\\
          & L $\rightarrow$ R & R $\rightarrow$ L \\
         \hline
        \noalign{\vskip 2pt}    
            Ours:barrier  & \textbf{308.7}$\pm$167.7$^{p,b}$   & \textbf{274}$\pm$130.5$^{p,l,b,f}$ \\
            PNN   & >500           & >500    \\
            $L^2$-SP  & >500          & >500    \\
            BSS   & >500           & >500   \\
            Fine-tune & >500          & >500  \\
            \noalign{\vskip 2pt}    
            \hdashline
            \noalign{\vskip 2pt}    
            Random   & >500 & >500   \\
        \end{tabular}
        }
        \vspace{0.5em}
        \caption{Our approach overcomes challenging domains where the barrier is extremely close to the robot and collision (and negative rewards) during training is frequent. Other methods are not able to find good policies as efficiently.}
        \label{table:fetch_reach}
	\vspace{-10pt}
\end{table}

\prg{3. Mujoco Ant} Finally, we explore whether our algorithm can generalize beyond navigation-like tasks that are traditionally associated with homotopy classes. We demonstrate two examples--Mujoco Ant and Assistive Feeding--where barrier states correspond to undesirable states rather than physical objects. In the Mujoco Ant environment~\cite{todorov2012mujoco}, the barrier states correspond to a set of joint angles $\{x \in \frac{\pi}{4} \pm 0.2 \text{ rad}\}$ that the ant's upper right leg cannot move to. The boundary of the barrier states are shown by the red lines in Fig.~\ref{fig:ant}. In our source task, the ant moves while its upper right joint remains greater than $\frac{\pi}{4}+0.2 \text{ rad}$. We call this orientation \emph{Down}. Our goal is to transfer to the target task where the joint angle is less than  $\frac{\pi}{4}-0.2 \text{ rad}$, or \emph{Up}. Results are shown in Table~\ref{table:ant_feeding}. We do not evaluate the other direction, \emph{Up} $\rightarrow$ \emph{Down}, because this direction was easy for all of our baselines to begin with, including our own approach. \textbf{We find that our approach was the most successful in fine-tuning across the set of joint angle barrier states}.

\begin{figure}[h!]
    \centering
    \includegraphics[scale=0.58]{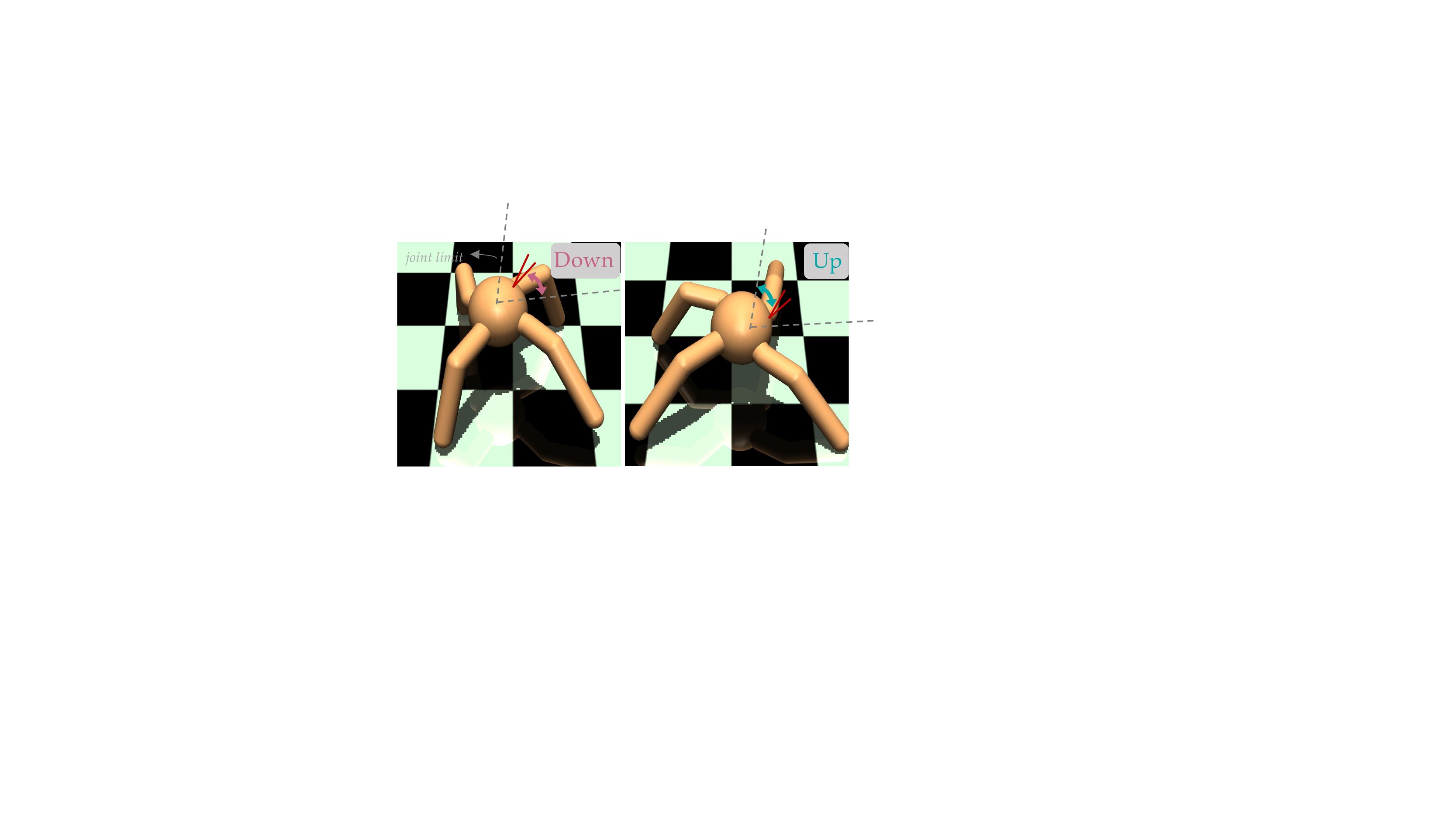}
    \caption{Mujoco Ant environment with a non-physical barrier. The red lines are the barrier states, or the joint angles the leg cannot move to. The grey dotted lines are the upper right leg's joint limits. }
    \label{fig:ant}
    \vspace{-15pt}
\end{figure}

\begin{figure}[h!]
    \centering
    \includegraphics[scale=0.55]{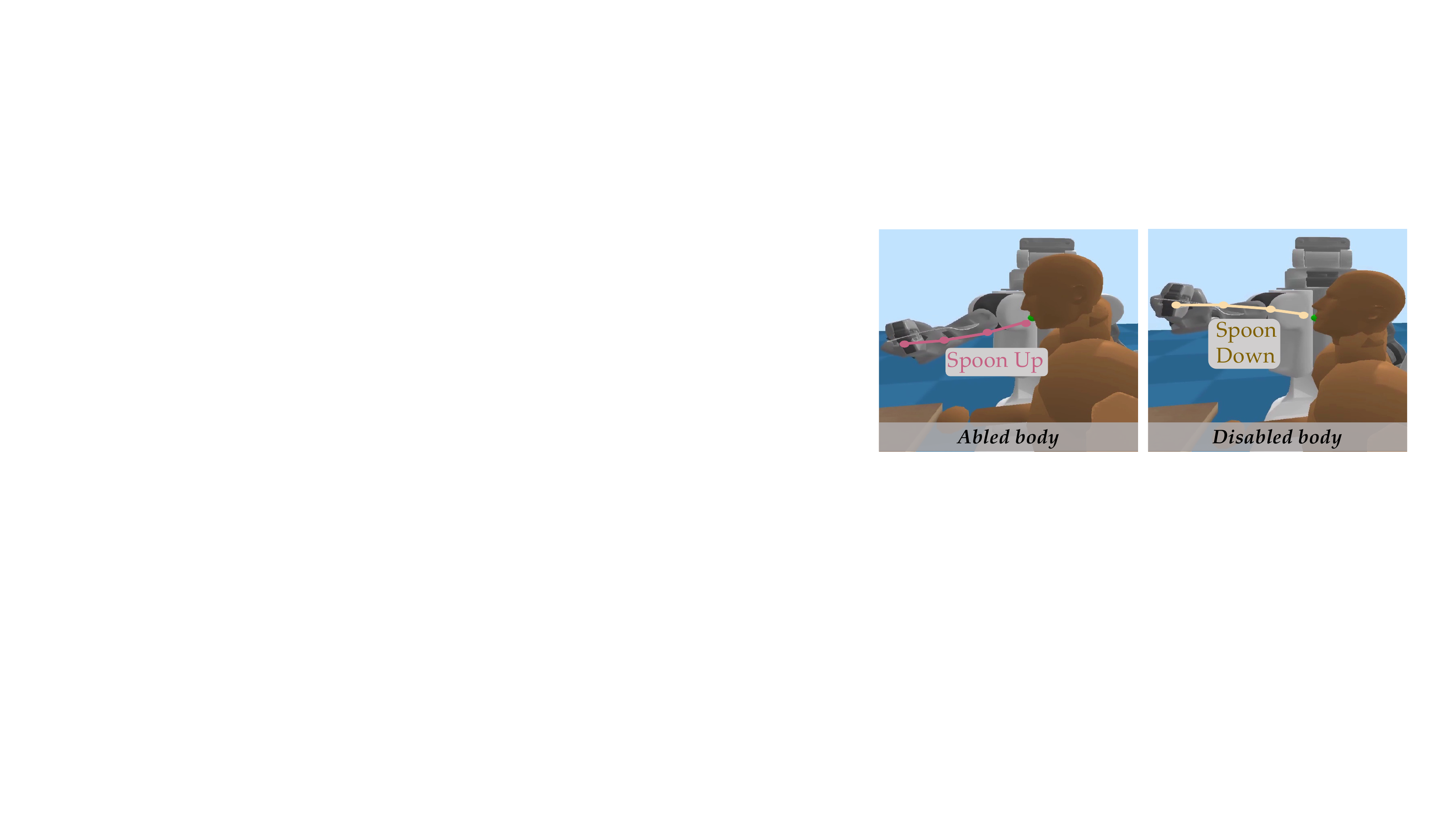}
    \caption{Assistive Feeding environment. The barrier states represent the horizontal spoon orientation. These states are undesirable for feeding because it misplaces the food in the human's mouth.}
    \label{fig:feeding}
    \vspace{-15pt}
\end{figure}

\begin{table}[h!]
		\centering
		\scalebox{1.}{
    	\begin{tabular}{ c|c c }
    	 & \textbf{Mujoco Ant} & \textbf{Assistive Feeding}\\
    	  \noalign{\vskip 2pt}    
          & Down $\rightarrow$ Up & Up $\rightarrow$ Down\\
         \hline
        \noalign{\vskip 2pt}    
            Ours  & \textbf{1420.0}$\pm$268.8$^{p,l,b,f}$ & \textbf{416}$\pm$32$^{p,l,b,f}$  \\
            PNN   &  >10000  &  >2000     \\
            $L^2$-SP  &  >10000  &  >2000    \\
            BSS   & >10000 &  >2000   \\
            Fine-tune  & 2058.5$\pm$535.2 & >2000   \\
            \noalign{\vskip 2pt}    
            \hdashline
            \noalign{\vskip 2pt}    
            Random   & 2290.4$\pm$585.8 & 494$\pm$28    \\
        \end{tabular}
        }
        \vspace{0.5em}
        \caption{Our approach works well in more general environments where barriers represent undesirable states instead of physical objects. We only use one curriculum step, so the reward weight and the barrier set size approaches are the same.}
        \label{table:ant_feeding}
	\vspace{-20pt}
\end{table}

\prg{4. Assistive Gym}
We use an assistive feeding environment~\cite{erickson2020assistivegym} to create another type of non-physical barrier in the robot's range of motion. In Fig.~\ref{fig:feeding} (right), we simulate a disabled person who cannot change her head orientation by a large amount. The goal is to feed the person using a spoon. Here, we can easily train a policy on an abled body with a normal head orientation, as in Fig.~\ref{fig:feeding} (left). However, we have limited data for the head orientation of the disabled person (the chin is pointing upwards as it is common in patients who use a head tracking device). To feed a disabled body, the spoon needs to point down, while for an abled body, the spoon needs to point up. The barrier states correspond to holding the spoon in any direction between these two directions when close to the mouth, which may `feed' the food to the user's nose or chin. This environment is an example of settings with limited data in the target environment, i.e., interacting with the disabled person. It also shows a setting with no physical barriers, and the `barrier states' correspond to the spoon orientations in between, which can be uncomfortable or even unsafe. \textbf{As shown in Table~\ref{table:ant_feeding}, Our \ours\ approach learns the new policy for the disabled person faster than training from scratch while the other fine-tuning methods fail to learn the target policy.} 
\section{Discussion}
\label{sec:discussion}
\prg{Summary} We introduce the idea of using homotopy classes to characterize the difficulty of fine-tuning between tasks with different reward functions. We propose a novel \ours\ method that first relaxes the problem and then forms a curriculum. We extend the notion of homotopy classes, which allows us to go beyond navigation environments and apply our approach on more general robotics tasks. We demonstrate that our method requires less samples on a variety of domains and tasks compared to other fine-tuning baselines.

\prg{Limitations and Future Work} 
Our work has a number of limitations. This includes the need for accessing the barrier states a priori. However, our assistive gym example is a step towards considering environments where barrier states are not as clearly defined a priori. In the future, we plan to apply our methods to other robotics domains with non-trivial homotopy classes by directly finding the homotopy classes~\cite{chen2010measuring} and then using our algorithm to fine-tune. 

\section{Acknowledgements}
We would like to thank NSF Award Number 2006388 and the DARPA Hicon-Learn project for their support. 

{\small
\bibliography{references}
\bibliographystyle{IEEEtranS}
}

\newpage

\appendix

\section{Definitions}
\begin{definition} \textbf{$W_\infty$ metric.} 
Given a metric space $X$, the $W_\infty$ distance between two distributions $\mu$ and $\nu$ on $X$, is defined as follows:
\[ W_\infty(\nu,\mu):=\inf_{\gamma\in \Gamma(\nu,\mu)} \sup_{(x, y)\in \supp(\gamma)} d(x, y),
\]
where $\Gamma(\mu, \nu)$ represents the set of couplings of $\mu$ and $\nu$, i.e., matchings of probability mass.
\end{definition}

\begin{definition} \textbf{$W_\infty$-continuity.} 
A mapping from a metric space with any distance function to a distribution space, $f: \mathbb{M} \rightarrow \mathbb{D}$ is continuous in Wasserstein-$\infty$ metric if and only if $\forall \epsilon > 0$ and $x_0\in M$, $\exists \delta > 0 $ such that if $x_0-\delta<x<x_0+\delta$, then  $W_\infty(f(x), f(x_0))< \epsilon$.
\end{definition}

\section{Proofs}
Since we are dealing with compact spaces for inputs to our functions, continuity is in general equivalent to uniform continuity. So below we interchangeably use the terms $W_\infty$ continuity and uniform $W_\infty$ continuity. We also note that uniform continuity is a weaker assumption than Lipschitz continuity, which is satisfied by neural networks with bounded parameters.

Below we show that the composition of two stochastic functions, each of which is uniformly $W_\infty$ continuous, is uniformly $W_\infty$ continuous. A stochastic function $f$ maps the input $x$ to a random $y$, and can also be viewed as a deterministic function that maps $x$ to a p.d.f.\ over $y$. The composition of $f$ and $g$ maps input $x$ first to a random intermediate output $y$ using $g$, and then $g$ is fed to $f$ to generate the random output $z$. So the composition can also be viewed as the function that maps $x$ to the p.d.f.\ of this composed output $z$. We remark that \emph{uniform} $W_\infty$ continuity is implied by $W_\infty$ continuity as long as we are working over compact spaces. We also remark that the \emph{uniform} continuity assumption can be relaxed for the inner function in the composition.

 \begin{lemma}\label{app:w_continuous_composition}
 Let $f_1(x)$, $f_2(y)$ to be two stochastic functions. Their composition, which we denote by $f_2\circ f_1$, maps $x$ to a distribution whose p.d.f.\ is defined by $\int_{y} p(y) f_2(y)dy$ where $p(y)$ is the p.d.f.\ $f_1(x)$ (note that $f_2(y)$ is a p.d.f.\ itself, so the integration results in a p.d.f.). The result of this composition is uniformly $W_\infty$ continuous as long as $f_1,f_2$ are uniformly $W_\infty$ continuous.
  \end{lemma}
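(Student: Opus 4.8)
The plan is to exploit the coupling characterization of the $W_\infty$ metric together with the gluing lemma from optimal transport. First I would record the key reformulation of $W_\infty$: from the definition $W_\infty(\nu,\mu)=\inf_{\gamma\in\Gamma(\nu,\mu)}\sup_{(x,y)\in\supp(\gamma)}d(x,y)$, the statement $W_\infty(\nu,\mu)<\epsilon$ is equivalent to the existence of a coupling $\gamma$ of $\nu$ and $\mu$ whose support lies within the $\epsilon$-neighborhood of the diagonal, i.e.\ $d(x,y)<\epsilon$ for $\gamma$-almost every $(x,y)$. This turns every $W_\infty$ bound into an almost-sure pointwise distance bound on suitably coupled samples, which is exactly what composes well.

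Next, fix $\epsilon>0$ and an arbitrary base point $x_0$; the goal is to produce a $\delta>0$, independent of $x_0$, such that $|x-x_0|<\delta$ implies $W_\infty(f_2\circ f_1(x),\,f_2\circ f_1(x_0))<\epsilon$. I would chain the two continuity hypotheses backward. By uniform $W_\infty$-continuity of $f_2$ there is an $\eta>0$ such that $d(y,y_0)<\eta$ forces $W_\infty(f_2(y),f_2(y_0))<\epsilon$ for \emph{all} $y,y_0$ (uniformity makes $\eta$ independent of the particular $y,y_0$, which is the crucial feature). By uniform $W_\infty$-continuity of $f_1$ there is a $\delta>0$ such that $|x-x_0|<\delta$ forces $W_\infty(f_1(x),f_1(x_0))<\eta$. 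Since neither $\eta$ nor $\delta$ depends on $x_0$, this $\delta$ will deliver uniform continuity.

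The core construction is the gluing. Given $x$ with $|x-x_0|<\delta$, the bound $W_\infty(f_1(x),f_1(x_0))<\eta$ yields a coupling $\gamma_1$ of $f_1(x)$ and $f_1(x_0)$ supported on pairs with $d(y,y_0)<\eta$. For each such pair the choice of $\eta$ guarantees $W_\infty(f_2(y),f_2(y_0))<\epsilon$, hence a coupling $\gamma_2^{(y,y_0)}$ of $f_2(y)$ and $f_2(y_0)$ supported on pairs with $d(z,z_0)<\epsilon$. I would then glue: sample $(y,y_0)\sim\gamma_1$ and then $(z,z_0)\sim\gamma_2^{(y,y_0)}$, obtaining a measure $\gamma$ on pairs $(z,z_0)$. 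By the gluing lemma its two marginals are precisely $\int p(y)f_2(y)\,dy=f_2\circ f_1(x)$ and the analogous integral for $x_0$, so $\gamma$ is a legitimate coupling of the two composed distributions; and by construction $d(z,z_0)<\epsilon$ holds $\gamma$-almost surely. Feeding this back into the coupling characterization gives $W_\infty(f_2\circ f_1(x),f_2\circ f_1(x_0))\le\epsilon$, completing the argument.

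The main obstacle I anticipate is the measure-theoretic legitimacy of the gluing step: one must select the family $\{\gamma_2^{(y,y_0)}\}$ in a jointly measurable way so that $\gamma$ is well defined, and verify that its marginals collapse to the stated integral p.d.f.s. This is exactly the content of the standard gluing lemma in optimal transport, which applies here because we work over compact (hence Polish) spaces; a measurable selection of near-optimal couplings suffices, and using near-optimal rather than optimal couplings also sidesteps any question of whether the infimum defining $W_\infty$ is attained. The remaining points require only care, not ideas: strict-versus-nonstrict inequalities are absorbed by working with a slightly smaller target radius or by the final $\le\epsilon$ conclusion, and the inner function $f_1$ need only be assumed continuous rather than uniformly $W_\infty$-continuous, since over the compact input domain continuity is automatically uniform and $f_1$ is invoked only at the single fixed level $\eta$.
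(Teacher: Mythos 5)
Your proof is correct and follows essentially the same route as the paper's: choose $\eta$ (the paper's $\delta_2$) from the uniform continuity of $f_2$, pull it back through $f_1$ to get $\delta$, take a near-diagonal coupling of $f_1(x)$ and $f_1(x_0)$, and combine the fiberwise couplings of $f_2(y)$ and $f_2(y_0)$ into a coupling of the composed distributions. Your version is in fact slightly more careful than the paper's, which asserts the mixing step as a ``triangle inequality'' and assumes the infimum in $W_\infty$ is attained, whereas you make the gluing lemma, the measurable selection of near-optimal couplings, and the resulting $\le\epsilon$ conclusion explicit.
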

\begin{proof}
Since both $f_1$ and $f_2$ are uniformly $W_\infty$ continuous functions, $\forall \epsilon > 0$, $\exists \delta_1, \delta_2 > 0$ such that
\begin{align*}
d(x, x')<\delta_1 &\implies W_\infty(f_1(x), f_1(x'))<\epsilon,\\
d(y, y')<\delta_2 &\implies W_\infty(f_2(y), f_2(y'))<\epsilon.
\end{align*}
Now fix an $\epsilon>0$. We need to show that there is a $\delta>0$ such that $d(x, x')<\delta$ implies $W_\infty(f_2\circ f_1(x), f_2\circ f_1(x))<\epsilon$. By uniform continuity of $f_2$, we first find a $\delta_2$ corresponding to $\epsilon$, and then treating $\delta_2$ as the ``$\epsilon$'' for the function $f_1$ above, we find a corresponding $\delta=\delta_1$. Our goal is to show that $d(x, x')<\delta$ implies $d(f_2\circ f_1(x), f_2\circ f_1(x))<\epsilon$.

Note that $d(x, x')<\delta_1$ implies $W_\infty(f_1(x), f_1(x'))<\delta_2$. We now take the coupling $\gamma$ from the definition of the $W_\infty$ metric for $f_1(x)$ and $f_1(x')$ which achieves the $\inf$. Note that we are making a simplifying assumption that the $\inf$ in the definition of $W_\infty$ is obtained at some particular $\gamma$; in general, this may not be the case, but tweaking our argument by taking the limit of such $\gamma$ proves the general case.

Because $W_\infty(f_1(x), f_2(x'))<\delta_2$, this means the support of $\gamma$ is only on pairs $(y, y')$ such that $d(y, y')<\delta_2$. Note that because $\gamma$'s marginals are $f_1(x), f_1(x')$, we have
\begin{align*}
	f_2\circ f_1(x)=&\int_{y, y'} \gamma(y, y')f_2(y)dy\cdot dy', \\
	f_2\circ f_1(x)=&\int_{y, y'} \gamma(y, y')f_2(y')dy\cdot dy'. \\
\end{align*}
By the triangle inequality for the $W_\infty$ metric we have
%\[
\begin{align*}
	W_\infty(f_2\circ f_1(x), f_2\circ f_1(x))\leq \\
	\int_{y, y'} \gamma(y, y') W_\infty(f_2(y), f_2(y'))dy\cdot dy'.
\end{align*}
%\]
But note that $\gamma$ is only supported on pairs $(y, y')$ such that $d(y, y')<\delta_2$. Uniform continuity of $f_2$ implies that $W_\infty(f_2(y), f_2(y'))$ for all such pairs is at most $\epsilon$. So we get
\begin{equation}
	W_\infty(f_2\circ f_1(x), f_2\circ f_1(x'))\leq \int_{y, y'} \gamma(y, y')\cdot \epsilon \cdot dy \cdot dy'=\epsilon.
\end{equation}

\end{proof}

\begin{theorem}\label{app:theorem:1}
Assume that $\pi_\theta$ is a parameterized policy for an MDP $\mathcal{M}$. If both $\pi_\theta$ and $\mathcal{M}$ are $W_\infty$-continuous, then a continuous change of policy parameters $\theta$ results in a continuous deformation of the induced random trajectory in the $W_\infty$ metric.
 However, continuous deformations of the trajectories does not ensure continuous changes of their corresponding policy parameters.
\end{theorem}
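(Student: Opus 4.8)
The plan is to treat the two assertions separately: the forward direction (parameter-continuity $\Rightarrow$ trajectory-continuity) follows by expressing the trajectory law as a composition of the $W_\infty$-continuous maps $\pi_\theta$ and $p$ and invoking Lemma~\ref{app:w_continuous_composition}, while the converse is refuted by exhibiting non-invertibility of the parameter-to-trajectory map.

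For the forward direction I would first reduce one step of the rollout to a composition. The trajectory is generated by $s_0 \sim \rho_0$ and then, for each $\tau$, $a_\tau \sim \pi_\theta(s_\tau,\cdot)$ followed by $s_{\tau+1}\sim p(s_\tau,a_\tau,\cdot)$. First I would package a single step as the stochastic map that reads the current prefix $(s_0,\dots,s_\tau)$ together with $\theta$, passes them through deterministically, and appends $(a_\tau,s_{\tau+1})$; its two new coordinates are produced by $\pi_\theta$ (continuous in $\theta$) composed with $p$ (continuous in $(s,a)$). By Lemma~\ref{app:w_continuous_composition} this one-step map is $W_\infty$-continuous in its input, and hence $W_\infty$-continuous in $\theta$ once we carry $\theta$ along as an auxiliary coordinate so that the shared dependence of every step on the same $\theta$ is absorbed into a single composition. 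Here I would lean on the lemma's remark that the \emph{inner} map of a composition need only be continuous (not uniformly so), which is what lets the policy's $\theta$-continuity interface with the transition's $(s,a)$-continuity.

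I would then define $F_T:\theta\mapsto\mu_\theta^{(T)}$, the law of the length-$T$ prefix $(s_0,\dots,s_T)$, and show by induction on $T$ that each $F_T$ is $W_\infty$-continuous: the base case $F_0\equiv\rho_0$ is constant in $\theta$, and the inductive step appends one more application of the $W_\infty$-continuous one-step map, so a further use of Lemma~\ref{app:w_continuous_composition} gives continuity of $F_{T+1}$. To pass from finite prefixes to the full trajectory law I would metrize the (compact, by the standing assumption) trajectory space by a discounted metric such as $d(\xi,\xi')=\sum_{\tau}2^{-\tau}d(s_\tau,s'_\tau)$, under which the contribution of all coordinates beyond horizon $T$ to any $W_\infty$ distance is at most a constant times $2^{-T}$. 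Given $\epsilon>0$ I would pick $T$ large enough to make this tail below $\epsilon/2$ and then use $W_\infty$-continuity of $F_T$ to control the first $T$ coordinates within $\epsilon/2$, yielding $W_\infty$-continuity of $\theta\mapsto\mu_\theta$.

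For the converse I would argue that $\theta\mapsto\mu_\theta$ is generically many-to-one and admits no continuous inverse, so trajectory-continuity cannot force parameter-continuity. Concretely, standard policy parameterizations carry symmetries (for neural networks, permutations of hidden units, sign flips, and other redundant directions), so there exist distinct $\theta_1\neq\theta_2$ with $\pi_{\theta_1}=\pi_{\theta_2}$ and hence $\mu_{\theta_1}=\mu_{\theta_2}$. Taking the constant (trivially continuous) path at $\mu_{\theta_1}$ in trajectory space, I would observe it is equally realized by a parameter sequence that jumps from $\theta_1$ to $\theta_2$; the trajectory deformation is continuous while the parameter change is not. More generally, because the forward map is not injective, the preimage of a continuous trajectory path need not be a continuous parameter path, which is exactly the claimed failure. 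The main obstacle I anticipate is the infinite-horizon limit in the forward direction: the composition lemma applies cleanly to each finite horizon, but upgrading the family of finite-horizon continuities to genuine $W_\infty$-continuity of the infinite trajectory law requires that the $\delta$'s be chosen uniformly across the unboundedly many composition steps, and the discounted metric together with the uniform $W_\infty$-continuity secured by compactness (as noted in the proofs' preamble) is what I expect to make this uniformity argument go through.
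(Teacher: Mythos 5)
Your proposal follows essentially the same route as the paper's own proof: the forward direction is an induction on the time step in which each new coordinate $(a_{t-1}$, then $s_t)$ is appended via Lemma~\ref{app:w_continuous_composition} applied to the concatenations $(\mathrm{id},\pi_\theta)$ and $(\mathrm{id},p)$, and the converse is dispatched by non-injectivity of the parameter-to-trajectory map. Your two additions --- carrying $\theta$ as an explicit auxiliary coordinate so that the shared dependence of every step on the same $\theta$ is a legitimate single composition, and the discounted-metric tail bound that upgrades continuity of every finite-horizon prefix law to continuity of the infinite trajectory law --- are sound refinements of steps the paper's proof leaves implicit, not a different argument.
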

\begin{proof}
Our strategy is to prove $W_\infty$ continuity by induction and repeated applications of Lemma~\ref{app:w_continuous_composition}.

When $t=0$, $s_0$ is a deterministic constant function with respect to $\theta$, thus, it is $W_\infty$ continuous.

For $t>0$, assume by induction that we have proved $(s_0, a_0, s_1, a_1, \dots, s_{t-1})$ is $W_\infty$ continuous w.r.t.\ $\theta$. Our goal is to prove that $(s_0,a_0,\dots, s_{t-1}, a_{t-1}, s_t)$ is also $W_\infty$ continuous. We do this in two steps. First we obtain $W_\infty$ continuity of $(s_0,a_0,\dots,s_{t-1},a_{t-1})$, and then $(s_0, a_0,\dots, s_{t-1},a_{t-1}, s_t)$.

For the first step, note that
\[ (s_0,a_0,\dots,s_{t-1},a_{t-1})=f(s_0,a_0,\dots,s_{t-1}), \]
where $f=(\mathrm{id}, \pi_\theta(s_{t-1}))$ is the concatenation of the identity operator and the policy $\pi_\theta$. Since this concatenation results in a $W_\infty$ continuous function, and $\pi_\theta$ is $W_\infty$ continuous, by Lemma~\ref{app:w_continuous_composition}, we get $W_\infty$ continuity of $(s_0,a_0,\dots, s_{t-1},a_{t-1})$.

For the second step, we compose with the dynamics of $\mathcal{M}$. Namely
\[ (s_0, a_0, \dots, s_{t-1}, a_{t-1}, s_t)=g(s_0,a_0,\dots,s_{t-1},a_{t-1}), \]
where $g=(\mathrm{id}, p(s_{t-1}, a_{t-1}))$ is the concatenation of the identity operator and the MDP dynamics $p$. By a similar reasoning, this results in a $W_\infty$ continuous function of $\theta$.

For the second half of the theorem, the same trajectory may be induced by different policies, so continuous change trajectories can corresponds to quite different model parameters.
\begin{comment}
 then $p_\theta(s_{t}) = \int_{s_{t-1},a}p_\theta(s_{t-1})\pi_\theta(s_{t-1},a)p(s_{t-1},a,s_t)$. Applying the composition rule and multiplication rule, $p_\theta(s_{t})$ is also a continuous function. So all the $p_\theta(s_t)$ is continuous with respect to $\theta$.

Applying the multiplication rule, $p(\xi=\{s_0,s_1,...\})$ is also continuous.

Therefore, if both $\pi_\theta$ and $\mathcal{M}$ are $W_\infty$-continuous, then a continuous change of policy parameters $\theta$ results in a continuous deformation of the induced random trajectory in the $W_\infty$ metric.

\end{comment}

\end{proof}

\begin{lemma}\label{app:deep_rl_model} A deep RL model, which is represented by a neural network $f_\theta$ with Lipschitz continuous activation functions such as ReLU and TanH at each layer, induces a $W_\infty$ continuous policy $\pi$.
\end{lemma}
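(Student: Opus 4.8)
The plan is to factor the induced policy as a composition $\pi(s,\cdot) = \Phi\bigl(f_\theta(s)\bigr)$, where $f_\theta:\mathcal{S}\to\mathbb{R}^k$ is the deterministic neural network that maps a state to the parameters of the action distribution (in the standard continuous-control setting, the mean of a Gaussian), and $\Phi:\mathbb{R}^k\to\mathbb{D}$ is the output head sending a parameter vector to the corresponding distribution over actions. Proving $W_\infty$-continuity of the map $s\mapsto\pi(s,\cdot)$ then reduces, by Lemma~\ref{app:w_continuous_composition}, to checking $W_\infty$-continuity of each factor; since $f_\theta$ is deterministic I may equivalently view it as the stochastic function $s\mapsto\delta_{f_\theta(s)}$ (a Dirac mass) and use $W_\infty(\delta_x,\delta_y)=d(x,y)$.

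First I would show $f_\theta$ is Lipschitz in the state. Each layer has the form $x\mapsto\sigma(Wx+b)$: the affine part is Lipschitz with constant $\lVert W\rVert_{\mathrm{op}}$, and each activation $\sigma$ (ReLU, TanH) is $1$-Lipschitz, so by the standard fact that a composition of Lipschitz maps is Lipschitz, $f_\theta$ is Lipschitz with constant $\prod_\ell\lVert W_\ell\rVert_{\mathrm{op}}$, which is finite for bounded weights (as already remarked before the proofs). In Dirac form this is exactly $W_\infty$-continuity of the inner factor.

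Next I would establish $W_\infty$-continuity of the head $\Phi$. For the canonical Gaussian policy with fixed covariance, $\Phi(m)=\mathcal{N}(m,\Sigma)$, and the translation coupling $x\mapsto x+(m_2-m_1)$ gives $W_\infty\bigl(\mathcal{N}(m_1,\Sigma),\mathcal{N}(m_2,\Sigma)\bigr)\le\lVert m_1-m_2\rVert$, so $\Phi$ is $1$-Lipschitz in $W_\infty$; the state-dependent-covariance case is analogous once $\Sigma$ is produced by a continuous reparameterization such as a softplus diagonal or a Cholesky factor. Feeding both factors into Lemma~\ref{app:w_continuous_composition} (or, for the fixed-covariance case, simply chaining $W_\infty(\pi(s,\cdot),\pi(s',\cdot))\le\lVert f_\theta(s)-f_\theta(s')\rVert\le L\,\lVert s-s'\rVert$) yields $W_\infty$-continuity of $s\mapsto\pi(s,\cdot)$.

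The step I expect to be the crux --- and where the hypotheses genuinely bite --- is continuity of the head $\Phi$ in the $W_\infty$ metric. For a continuous action space with a location-family head this holds because infinitesimal parameter changes give infinitesimal $W_\infty$ perturbations, as above. This is exactly where the $W_\infty$ geometry is delicate: with a discrete action space and a softmax head the statement would fail, since shifting any probability mass between two distinct actions forces a $W_\infty$ jump of at least their separation --- the same ``tweaking the percentages'' obstruction emphasized in the main text. I would therefore phrase the lemma for continuous action parameterizations, where a deterministic Lipschitz network composed with a location-family head is genuinely $W_\infty$-continuous in the state, and note the discrete case as an explicit exclusion.
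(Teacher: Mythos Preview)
Your argument is clean, but it proves continuity in the \emph{wrong variable}. The paper's Definition of $W_\infty$-continuity of a policy is continuity of the map $\theta\mapsto\pi_\theta$ (parameters to the induced state-action distribution), not $s\mapsto\pi(s,\cdot)$. This is the form Theorem~\ref{theorem:1} needs: it starts from a continuous change of \emph{parameters} $\theta$ and wants a continuous deformation of trajectories. You have established that $s\mapsto\pi_\theta(s,\cdot)$ is $W_\infty$-continuous for each fixed $\theta$, which is a different (and for the theorem, insufficient) statement.

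The fix is structurally identical to what you wrote, just with the roles of $s$ and $\theta$ swapped. You need $\theta\mapsto f_\theta(s)$ Lipschitz uniformly in $s$: for a layer $x\mapsto\sigma(Wx+b)$, perturbing $W,b$ changes the output by at most $K(\lVert\Delta W\rVert\,\lVert x\rVert+\lVert\Delta b\rVert)$, so you need a bound on the intermediate activations $\lVert x\rVert\le D$ (this is exactly the constant $D$ the paper introduces, and it is where compactness of the state space / bounded weights is used). Chaining through the layers gives $\lVert f_{\theta_1}(s)-f_{\theta_2}(s)\rVert\le C\,\lVert\theta_1-\theta_2\rVert$ for all $s$, and then your head argument with $\Phi$ applies verbatim. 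The paper's proof also has a short final step passing from the conditional $P(f_\theta(s))$ to the joint $\pi_\theta$ over $(s,a)$ by coupling the $s$-marginals with the identity; you should include that as well, since the definition is stated for the joint distribution.
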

\begin{proof}
We assume the action distribution is a type of distribution $P(\mathbf{w})$ with parameters $\mathbf{w}$, such as Gaussian distribution with mean and standard deviation. $P$ can be regarded as a mapping for parameter $\mathbf{w}$ to a distribution. We assume that $P$ is $W_\infty$ continuous, which is satisfied for commonly used distributions such as the Gaussian distribution. So $\forall \epsilon>0$ and $\mathbf{w}$, $\exists \delta$ such that if $d(\mathbf{w}, \mathbf{w}_1)<\delta$, $W_\infty(P(\mathbf{w}), P(\mathbf{w}_1))<\epsilon$.

The policy of deep RL model is represented by a neural network $f_\theta$ parameterized by $\theta$, which outputs the parameters $\mathbf{w}$ of the distribution. Every layer of the neural network employs a Lipschitz continuous activation function such as ReLU and TanH. We define the number of layers in the network as $L$ and the Lipschitz constant for every layer $l \in L$ as $K_l$. We assume that the norms of the output of each layer with respect to all the states in the state space is upper bounded by a constant value $D$ or otherwise the output is infinity. Then we can derive that
\begin{equation}
\begin{aligned}
    &\lvert f_{\theta_1}(s) - f_{\theta_2}(s) \rvert \le  \Pi_{l=1}^{L} (K_l \lvert \theta_1 - \theta_2\rvert D). \\
\end{aligned}
\end{equation}
If $\lvert \theta_1 - \theta_2\rvert < \frac{\delta}{\Pi_{l=1}^{L} (K_l D)}$, then $\lvert f_{\theta_1}(s) - f_{\theta_1}(s) \rvert < \delta$, and then $W_\infty(P(f_{\theta_1}(s)), P(f_{\theta_2}(s))<\epsilon$. So the action distribution is $W_\infty$ continuous with respect to the model parameters. Then we derive the $W_\infty$ continuity of $\pi_{\theta}(s,a)$.

\begin{equation}
\begin{aligned}
    & \inf_{\gamma \in \Gamma(\pi_{\theta_1},\pi_{\theta_2})}\sup_{((s_1,a_1), (s_2,a_2))\supp(\gamma)} d((s_1,a_1), (s_2,a_2)) \\
    & =  \inf_{\gamma \in \Gamma(f_{\theta_1}(s)p(s),f_{\theta_2}(s)p(s))}\\
    &\sup_{((s_1,a_1), (s_2,a_2))\supp(\gamma)} d((s_1,a_1), (s_2,a_2)) \\
    & \le \inf_{\gamma' \in \Gamma(f_{\theta_1}(s),f_{\theta_2}(s))} \sup_{(a_1, a_2)\supp(\gamma')} d(a_1,a_2)
\end{aligned}
\end{equation}
Since the $f_{\theta}(s)$ is continuous with respect to $\theta$, then $\pi_{\theta}(s,a)$ is also $W_\infty$ continuous.
\end{proof}

\begin{remark}\label{app:theorem:2}
Fine-tuning deep RL model parameters across homotopy classes is more difficult or even infeasible in terms of sample complexity compared to fine-tuning within the same homotopy class, under the assumptions that the transition probability function and policy of $\mathcal{M}$ are $W_\infty$-continuous, learning rate is sufficiently small, and gradients are bounded.
\end{remark}

\begin{proof}[Justification for Remark~\ref{app:theorem:2}]

\textit{Large negative rewards, or barriers, correspond to high loss and create gradients pointing away from the barriers.}
Looking at Equation~(2), the large cost incurred when a trajectory $\tau$ collides with a barrier creates a large negative $R(\tau)$ term. This causes the gradient estimate $\nabla_{\theta}\hat{J}(\pi_{\theta})$ to be extremely large. 

As implied in Theorem \ref{app:theorem:1}, fine-tuning across homotopy classes implies that trajectories must intersect with the barrier at some steps during training if both the policies and the transition function are $W_\infty$-continuous. The policy induced by the deep RL model is $W_\infty$-continuous based on Lemma~\ref{app:deep_rl_model}. When intersecting with the barrier, large negative rewards create large gradients that point away from the optimal target policy weights. Thus fine-tuning across homotopy classes will always be blocked by the barriers. Instead, fine-tuning within the homotopy classes can always find a deformation process without intersecting with the barriers. 

\end{proof}

\begin{proposition1}
For curriculum learning by reward weight, in every stage, the learned policy achieves a higher reward than the initialized policy evaluated on the final target task.
\end{proposition1}
\begin{proof}
We denote $\pi_{\text{relax}}$ by $\pi^*_{\theta_0}$ and set $\alpha_0$ as $0$. For the reward weight approach, in every stage $k\ge 1$, we can write the reward function 
\begin{equation}
\begin{small}
    \mathcal{R}^{\text{cur}}(s,a;\alpha_k) = 
    \begin{cases} 
       \mathcal{R}'(s,a) - \alpha_k M & s \in \mathbf{S}_b \\
       \mathcal{R}'(s,a) & s \not\in  \mathbf{S}_b
   \end{cases}
  \end{small}
\end{equation} as a sum of two parts 
\begin{equation}
\begin{small}
    \mathcal{R}^{\text{cur}}(s,a;\alpha_{k-1}) = 
    \begin{cases} 
       \mathcal{R}'(s,a) - \alpha_{k-1} M & s \in \mathbf{S}_b \\
       \mathcal{R}'(s,a) & s \not\in  \mathbf{S}_b
   \end{cases}
  \end{small}
\end{equation} and 
\begin{equation}
\begin{small}
\begin{aligned}
    \mathcal{R}^{\text{cur}}_{\text{diff}}(s,a;\alpha_k,\alpha_{k-1}) & = 
    \begin{cases} 
       (\alpha_{k-1}-\alpha_{k}) M & s \in \mathbf{S}_b \\
       0 & s \not\in  \mathbf{S}_b.
   \end{cases} \\
   & = (\alpha_{k}-\alpha_{k-1})\mathcal{R}^{\text{bar}} \\
   & where\ \mathcal{R}^{\text{bar}} = \begin{cases} 
       - M & s \in \mathbf{S}_b \\
       0 & s \not\in  \mathbf{S}_b.
   \end{cases}
   \end{aligned}
  \end{small}
\end{equation}
$\mathcal{R}^{\text{bar}}$ can be regarded as the reward function which only penalizes the barrier states.
Then if evaluated under $\mathcal{R}^{\text{cur}}(s,a;\alpha_{k-1})$, since $\pi^*_{\theta_{k-1}}$ is trained to maximize the expected return under $\mathcal{R}^{\text{cur}}(s,a;\alpha_{k-1})$,$\pi^*_{\theta_{k-1}}$ achieves higher expected return than $\pi^*_{\theta_{k}}$. However, if evaluated under $\mathcal{R}^{\text{cur}}(s,a;\alpha_k)$, similarly, $\pi^*_{\theta_{k}}$ achieves higher expected return than $\pi^*_{\theta_{k-1}}$. Therefore, $\pi^*_{\theta_{k}}$ achieves higher expected return than $\pi^*_{\theta_{k-1}}$ if evaluated under the reward $\mathcal{R}^{\text{cur}}_{\text{diff}}(s,a;\alpha_k,\alpha_{k-1})$. Since $\alpha_{k}-\alpha_{k-1}>0$, so $\pi^*_{\theta_{k}}$ achieves higher expected return than $\pi^*_{\theta_{k-1}}$ under reward $\mathcal{R}^{\text{bar}}$, which means that $\pi^*_{\theta_{k}}$ is penalized less than $\pi^*_{\theta_{k-1}}$ by the barrier states. Now $\pi^*_{\theta_{k}}$ has achieved higher expected return than $\pi^*_{\theta_{k-1}}$ under both $\mathcal{R}^{\text{cur}}(s,a;\alpha_{k})$ and $\mathcal{R}^{\text{bar}}$, and thus also achieved higher expected return under the final target reward $\mathcal{R}_t = \mathcal{R}^{\text{cur}}(s,a;\alpha_{k})+(1-\alpha_{k})\mathcal{R}^{\text{bar}}$.
\end{proof}

\begin{lemma}\label{app:lemma:x_1}
There exists $\mathbf{S}_{b_1}$ that divides the trajectories of $\pi^*_{s}$ and $\pi^*_{\text{relax}}$ into two homotopy classes.
\end{lemma}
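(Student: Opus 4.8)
The plan is to construct $\mathbf{S}_{b_1}$ explicitly as a ``thin slice'' of the full barrier set $\mathbf{S}_b$ that still topologically separates the two source trajectories. The key observation is that $\xi_s^*$ (from $\pi_s^*$) and $\xi_{\text{relax}}^*$ (from $\pi_{\text{relax}}^*$) lie in \emph{different} homotopy classes with respect to the full barrier $\mathbf{S}_b$: by construction of the relaxing stage, $\pi_{\text{relax}}^*$ is trained on the barrier-free reward but its trajectory leans toward the target side (swerving left), whereas $\pi_s^*$ swerves right, and these two cannot be continuously deformed into one another in the $W_\infty$ metric without crossing $\mathbf{S}_b$. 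So $\mathbf{S}_b$ already separates them into two homotopy classes; the content of the lemma is that we can find a \emph{subset} $\mathbf{S}_{b_1} \subseteq \mathbf{S}_b$ that does the same separating job, serving as the seed of the curriculum.

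First I would formalize the separation statement. Since $\xi_s^*$ and $\xi_{\text{relax}}^*$ are in distinct homotopy classes relative to $\mathbf{S}_b$, there is no $W_\infty$-continuous deformation between their trajectory distributions avoiding $\mathbf{S}_b$; by Theorem~\ref{app:theorem:1}, any continuous path of policy parameters connecting $\pi_s^*$ to $\pi_{\text{relax}}^*$ induces a continuous family of trajectory distributions, and at least one intermediate distribution must place mass on $\mathbf{S}_b$. Next I would argue that the portion of $\mathbf{S}_b$ actually ``touched'' by this obstruction is what we need. Concretely, consider the connected barrier set $\mathbf{S}_b$ together with the two trajectories on either side of it; I would pick $\mathbf{S}_{b_1}$ to be a minimal connected sub-slice of $\mathbf{S}_b$ that still stretches across the ``gap'' between the region reachable by right-swerving deformations and the region reachable by left-swerving deformations. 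Because $\mathbf{S}_b$ is connected (the single-barrier hypothesis), such a spanning sub-slice exists and remains connected; any deformation from $\xi_s^*$ to $\xi_{\text{relax}}^*$ that avoids $\mathbf{S}_{b_1}$ would have to route around it, but routing around $\mathbf{S}_{b_1}$ while staying on the correct side is blocked for the same topological reason that $\mathbf{S}_b$ blocks it, since $\mathbf{S}_{b_1}$ inherits the spanning/separating property.

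The main obstacle I anticipate is making the notion of a ``separating sub-slice'' precise and proving it exists in full generality, i.e.\ that connectedness of $\mathbf{S}_b$ genuinely guarantees a connected subset $\mathbf{S}_{b_1}$ whose complement disconnects the free configuration space into the two relevant components. In the clean navigation setting this is essentially a Jordan-curve / homology argument: the barrier is a curve-like obstacle and any connected arc of it that touches both ``ends'' of the relevant region already separates the plane locally. The difficulty is that the generalized setting allows high-dimensional state spaces and arbitrary connected penalty sets, where a connected subset need not separate at all (e.g.\ a point does not separate $\mathbb{R}^3$). I would handle this by appealing to the $Z_2$-homology formalization cited in the homotopy-class footnote~\cite{chen2010measuring}: the two source trajectories realize distinct homology classes, this distinction is detected by some cohomology generator dual to $\mathbf{S}_b$, and one extracts $\mathbf{S}_{b_1}$ as the support of a cycle generating that class. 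The honest caveat, which I would state explicitly, is that the argument leans on the single-connected-barrier assumption precisely so that this generating cycle is supported on one connected piece; with multiple barriers the separating cycle may be forced to live on several components, which is exactly the failure mode the paper flags after the lemma. Finally, once $\mathbf{S}_{b_1}$ is shown to separate $\xi_s^*$ and $\xi_{\text{relax}}^*$, I would verify $\emptyset \subset \mathbf{S}_{b_1} \subset \mathbf{S}_b$ so that it can legitimately seed the inflating curriculum of Proposition~\ref{theorem:cur}.
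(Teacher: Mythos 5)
Your approach is genuinely different from the paper's, and it has a gap at its very first step. The paper does not construct $\mathbf{S}_{b_1}$ inside the proof of this lemma at all; it argues by contradiction: if \emph{no} subset of $\mathbf{S}_b$ separated $\xi_s^*$ from $\xi_{\text{relax}}^*$, then $\xi_{\text{relax}}^*$ would avoid $\mathbf{S}_b$ entirely and be homotopic to $\xi_s^*$, so the policy optimizing the relaxed reward $\mathcal{R}'$ while never touching $\mathbf{S}_b$ would already be optimal for the true target reward $\mathcal{R}_t$ --- placing the target optimum in the source homotopy class and contradicting the standing assumption that source and target lie in different classes. That is the step where the cross-homotopy hypothesis enters; your write-up never invokes it, and without it the lemma is simply false (if source and target were in the same class, no subset of $\mathbf{S}_b$ would need to separate $\xi_s^*$ from $\xi_{\text{relax}}^*$).

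The concrete gap in your construction is the premise that $\xi_s^*$ and $\xi_{\text{relax}}^*$ ``lie in different homotopy classes with respect to the full barrier $\mathbf{S}_b$.'' The relaxing stage trains against a reward with the barrier penalty removed, so in the generic case $\xi_{\text{relax}}^*$ passes \emph{through} $\mathbf{S}_b$: the paper's description of the relaxing stage has the relaxed trajectory remaining ``fairly centered,'' and its Algorithm 2 for finding $\mathbf{S}_{b_1}$ is built entirely around the observation that $\xi_{\text{relax}}$ collides with $\mathbf{S}_b$. A trajectory that intersects the obstacle is not in any homotopy class relative to that obstacle, so your separation premise is not well posed, and --- more importantly --- the set $\mathbf{S}_{b_1}$ you must produce has to satisfy a constraint your ``spanning sub-slice'' never enforces: neither $\xi_s^*$ \emph{nor} $\xi_{\text{relax}}^*$ may intersect $\mathbf{S}_{b_1}$, while $\mathbf{S}_{b_1}$ must still separate them. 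A minimal slice of $\mathbf{S}_b$ that ``stretches across the gap'' will typically be pierced by the relaxed trajectory. This is exactly why the paper's constructive companion to this lemma repeatedly bisects $\mathbf{S}_b$, using a collision checker to discard halves until it finds a piece that $\xi_{\text{relax}}$ misses yet still changes the homotopy type. If you want to keep a constructive, homology-flavored proof, you need to fold that collision-avoidance requirement into the extraction of the separating cycle; otherwise the clean route is the paper's short contradiction argument.
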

\begin{proof}
If there is no such $\mathbf{S}_{b_1}$, then the trajectories of $\pi^*_{\text{relax}}$ and $\pi^*_{s}$ visit no states in $\mathbf{S}_b$ and can be continuously deformed to each other without visiting any state in $\mathbf{S}_b$. So $\pi^*_{\text{relax}}$ and $\pi^*_{s}$ are in the same homotopy class with respect to $\mathbf{S}_b$. Since $\pi^*_{\text{relax}}$ optimizes the reward $R'_t$ and visits no states in $\mathbf{S}_b$, it should be the optimal policy for the target reward $R_t$. Therefore, the source and target reward are in the same homotopy class, which violates the assumption that they are in different homotopy classes. Thus, there must exist such $\mathbf{S}_{b_1}$.
\end{proof}

\begin{proposition}\label{app:theorem:cur}
A curriculum beginning with $\mathbf{S}_{b_1}$ as described in Lemma~\ref{app:lemma:x_1} is inflated to $\mathbf{S}_b$ with sufficiently small changes in each step, i.e., small enough to enable reinforcement learning algorithms to approximate optimal policies whose trajectories do not visit barrier states. This curriculum can always learn the optimal policy $\pi_t^*$ for the final target reward in the last step.
\end{proposition}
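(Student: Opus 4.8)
The plan is to argue by induction on the curriculum stages $k = 0, 1, \dots, K$, maintaining the invariant that the policy $\pi^*_{\theta_k}$ produced at task $\mathcal{M}_t^k$ lies in the \emph{target} homotopy class and is reachable from $\pi^*_{\theta_{k-1}}$ by a within-class deformation that never crosses a barrier. The base case is supplied directly by Lemma~\ref{app:lemma:x_1}: since $\mathbf{S}_{b_1}$ separates the trajectories of $\pi^*_s$ and $\pi^*_{\text{relax}}$ into two distinct homotopy classes, and $\pi^*_s$ occupies the source class, the relaxed policy $\pi^*_{\text{relax}} = \pi^*_{\theta_0}$ must occupy the other class, which under the single-barrier assumption is precisely the target class containing $\pi^*_t$. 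This anchors the curriculum on the correct side of the barrier from the very first stage.

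For the inductive step I would fix $k \ge 1$ and inflate the barrier from $\mathbf{S}_{b_{k-1}}$ to $\mathbf{S}_{b_k}$, where by construction the increment $\mathbf{S}_{b_k}\setminus\mathbf{S}_{b_{k-1}}$ is small and the sets remain connected. Starting from $\pi^*_{\theta_{k-1}}$, whose induced trajectory distribution at most grazes the newly penalized states, the only modification needed to optimize $\mathcal{R}^{\text{cur}}(\cdot;\mathbf{S}_{b_k})$ is a local re-routing around the added states. I would then invoke the forward direction of Theorem~\ref{app:theorem:1}: because $\pi_\theta$ and $\mathcal{M}$ are $W_\infty$-continuous, the required trajectory correction is small and can be realized by a correspondingly small parameter change, so the stage-$k$ optimum remains in the target class and is recovered by the RL subroutine---this is exactly the ``small enough for reinforcement learning to find trajectories'' hypothesis of the statement. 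Crucially, since every $\mathbf{S}_{b_k}\supseteq\mathbf{S}_{b_1}$, the growing barrier keeps the separating wall in place at all times, so no stage ever admits a cheaper route through the source class that would tempt the policy to jump homotopy classes.

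Finally, I would close the induction at stage $K$: here $\mathbf{S}_{b_K}=\mathbf{S}_b$ and $\mathcal{R}^{\text{cur}}(\cdot;\mathbf{S}_{b_K})=\mathcal{R}_t$, so the target-class optimum that avoids all of $\mathbf{S}_b$ is by definition $\pi^*_t$, and the invariant yields $\pi^*_{\theta_K}=\pi^*_t$.

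I expect the main obstacle to be the continuity argument inside the inductive step, because Theorem~\ref{app:theorem:1} only guarantees the \emph{forward} implication (a small parameter change gives a small trajectory change) and explicitly denies the converse. Hence I cannot directly conclude that every small desired trajectory deformation is achievable by a small parameter update; instead I would lean on the proposition's standing assumption that each inflation step is chosen small enough for the RL algorithm to track the within-class optimum, and argue that the connectedness of $\mathbf{S}_{b_k}$ ensures the within-class optimum itself deforms continuously without ever forcing a barrier crossing. Making the ``sufficiently small'' threshold precise---relating the admissible barrier increment to the $W_\infty$-modulus of continuity of $\pi_\theta$ and $p$ and to the RL algorithm's approximation power---is the delicate quantitative core of the argument.
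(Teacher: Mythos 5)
Your proposal is correct and follows essentially the same route as the paper's proof: an induction over curriculum stages whose base case uses Lemma~\ref{app:lemma:x_1} together with the two-homotopy-class assumption to place the stage-one optimum on the target side of $\mathbf{S}_{b_1}$, an inductive step that relies on the ``sufficiently small inflation'' hypothesis and the monotone growth $\mathbf{S}_{b_1}\subset\cdots\subset\mathbf{S}_{b_K}$ to keep each stage's optimum in the target class, and the observation that the stage-$K$ optimum is $\pi_t^*$. Your explicit flagging that Theorem~\ref{app:theorem:1} only gives the forward implication is a fair point the paper resolves the same way you do, by assuming the RL subroutine converges to the optimal policy at each stage.
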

\begin{proof}
We demonstrate the case with only two homotopy classes. If there are multiple homotopy classes, then the obstacle state set contains multiple non-connected subsets, and we can design the $\mathbf{S}_{b_k}$ to form a sequence of transfer tasks to gradually transfer from the source homotopy class to the target homotopy class where two homotopy classes in each transfer task only have one connected obstacle state subset between them. Thus the analysis for two homotopy classes is sufficient as it can be applied to each transfer task to ensure the convergence. Importantly, here we assume to use reinforcement learning algorithms that can converge to optimal policies for each task in the curriculum. We prove this proposition in two steps:

\textit{1) The optimal policy $\pi_1^*$ for the first step with $\mathbf{S}_{b_1}$ is in the same homotopy class as the optimal policy $\pi_t^*$ for the final target reward }

As shown in Lemma~\ref{app:lemma:x_1}, there exists $\mathbf{S}_{b_1}$ that divides the trajectories of $\pi_s^*$ and $\pi_{\text{relax}}^*$ into two homotopy classes. So in the first stage, the barrier $\mathbf{S}_{b_1}$ is between $\pi_s^*$ and $\pi_1^*$ (assuming our reinforcement learning algorithm is able to find the optimal policy $\pi_1^*$). Since there are only two homotopy classes and $\pi_s^*$ and $\pi_1^*$ are from different homotopy classes, the trajectories of $\pi_t^*$ and $\pi_1^*$ have to be from the same homotopy class.

\textit{2) In every step $k+1,(k>=1)$, if in the previous step, $\pi_k^*$ can be learned and its trajectories are in the same homotopy class as trajectories induced by $\pi_t^*$ under $\mathbf{S}_{b_k}$, then the current step's optimal policy $\pi_{k+1}^*$ can also be learned and its trajectories are from the same homotopy class as trajectories induced by $\pi_t^*$ under $\mathbf{S}_{b_{k+1}}$}

In step $k+1$, the reinforcement learning algorithm only needs to spend very little exploration to prevent the policy from visiting $\mathbf{S}_{b_{k+1}}$ since we assume that the inflation from $\mathbf{S}_{b_{k}}$ to $\mathbf{S}_{b_{k+1}}$ is small enough to make reinforcement learning able to find trajectories not visiting states in $\mathbf{S}_{b_{k+1}}$. Since the last step's optimal policy $\pi_k^*$ is in the same homotopy class as $\pi_t^*$, $\pi_{k+1}^*$, which is fine-tuned from $\pi_k^*$, cannot be in the same homotopy class as $\pi_s^*$ due to the larger set of barrier states between $\pi_k^*$ and $\pi_s^*$. So $\pi_{k+1}^*$ can only be in the same homotopy class as $\pi_t^*$.

With the above two statements, we can derive that at every step, a policy without visiting the barrier states is achievable and it is in the same homotopy class as $\pi_t^*$. In the final step, such policy is exactly $\pi_t^*$.

\end{proof}

\section{Algorithm}

In this section, we first provide our main curriculum learning algorithm in Algorithm~\ref{app:alg:weight_cur}.

\begin{algorithm}[ht]
\KwIn{Optimal policy $\pi^*_{\text{relax}}$ for the relaxed problem $\mathcal{M}^{\text{relax}}_t$, curriculum parameters $\alpha_1,\alpha_2,...,\alpha_K$ or $\mathbf{S}_{b_1},\mathbf{S}_{b_2},...,\mathbf{S}_{b_K}$} 
 \For{$k=1$ \KwTo $K$}{
    \eIf{$k==1$}{
        Initialize the policy $\pi_{\theta_k}$ with $\pi^*_{\text{relax}}$\;
    }{
        Initialize the policy $\pi_{\theta_k}$ with $\pi_{\theta_{k-1}}$\;
    }
    Train $\pi_{\theta_k}$ in task $\mathcal{M}_t^{k}$ with model-free reinforcement learning algorithm until convergence\;
 }
 \KwOut{$\pi_{\theta_K}$ as the optimal policy for the target MDP $\mathcal{M}_t$.}
 %\caption{Curriculum Learning by Reward Weight for General Case}\label{alg:weight_cur}
 \caption{Curriculum Learning}\label{app:alg:weight_cur}

\end{algorithm}

As described in Lemma~\ref{app:lemma:x_1}, we have shown that there exists a $\mathbf{S}_{b_1}$ that divides the trajectories of $\pi^*_{s}$ and $\pi^*_{\text{relax}}$ into two homotopy classes. This is particularly needed for the Barrier Set Size approach. Lemma~\ref{app:lemma:x_1} does not provide an approach for constructing $\mathbf{S}_{b_1}$. Here we propose one approach (Algorithm~\ref{app:alg:barrier_set_size}) for finding such an $\mathbf{S}_{b_1}$.

To find a desired $\mathbf{S}_{b_1}$, we assume that we have two oracle functions available: (1) a collision checker: the function $f_1(\xi, \mathbf{S})$, which outputs \texttt{true} if $\xi$ passes through $\mathbf{S}$, and outputs 
\texttt{false} otherwise; (2) a homotopy class checker: the function $f_2(\xi_1, \xi_2, \mathbf{S})$, which outputs \texttt{true} if the obstacle set $\mathbf{S}$ divide $\xi_1$ and $\xi_2$ into two different homotopy classes, and outputs \texttt{false} otherwise. These assumptions are shown in the Input in Algorithm~\ref{app:alg:barrier_set_size}.

\begin{algorithm}[!h]
\KwIn{Optimal policy for the relaxed problem $\pi_{\text{relax}}^*$, the source optimal policy $\pi_{\text{s}}^*$, a collision checker $f_1(\xi, \mathbf{S})$, a homotopy-class checker $f_2(\xi_1, \xi_2, \mathbf{S})$, the original obstacle set $\mathbf{S}_b$, Inflating size $\epsilon$}

\tcc{Find the obstacle set divides $\pi_{\text{relax}}^*$ and  $\pi_{\text{s}}^*$}

Derive the trajectory $\xi_{\text{relax}}$ induced by $\pi_{\text{relax}}^*$;

Derive the trajectory $\xi_{\text{s}}$ induced by $\pi_{\text{s}}^*$;

Assign $\mathbf{S}_{b_1} \leftarrow \mathbf{S}_b$;

\While{true}
{Cut $\mathbf{S}_{b_1}$ into two halves $\mathbf{S}_{h_1}$ and $\mathbf{S}_{h_2}$;

\uIf{$f_1(\xi_{\text{relax}}, \mathbf{S}_{h_1})$}
{Assign $\mathbf{S}_{b_1} \leftarrow \mathbf{S}_{h_1}$;

\Continue;}
\uElseIf{$f_2(\xi_{\text{relax}}, \xi_{\text{s}}, \mathbf{S}_{h_1})$}
{Assign $\mathbf{S}_{b_1} \leftarrow \mathbf{S}_{h_1}$;

\Break;}
\uElseIf{$f_1(\xi_{\text{relax}}, \mathbf{S}_{h_2})$}
{Assign $\mathbf{S}_{b_1} \leftarrow \mathbf{S}_{h_2}$;

\Continue;}
\Else
{Assign $\mathbf{S}_{b_1} \leftarrow \mathbf{S}_{h_2}$;

\Break;}
}

\tcc{Inflate the obstacle set to collide with the trajectories of $\pi_{\text{relax}}^*$}

\While{ $\neg f_1(\xi_{\text{relax}}, \mathbf{S}_{b_1})$}
{
Find a subset $\mathbf{S}'_b$ of $\mathbf{S}_b$ where $\mathbf{S}'_b$ is connected to $\mathbf{S}_{b_1}$ and $\mathbf{S}'_b \setminus \mathbf{S}_{b_1} \ne \emptyset$;

Assign $\mathbf{S}_{b_1} \leftarrow \mathbf{S}_{b_1} \bigcup \mathbf{S}'_b$;
}

  \KwOut{ First obstacle set in the curriculum $\mathbf{S}_{b_1}$.}
  \caption{Finding $\mathbf{S}_{b_1}$ for Curriculum Learning by Barrier Set Size}\label{app:alg:barrier_set_size}
\end{algorithm}

First, we observe that the trajectory $\xi_{\text{relax}}$ induced by the optimal relaxed policy $\pi_{\text{relax}}^*$ passes through the original obstacle $\mathbf{S}_b$, so there exists a subset of $\mathbf{S}_b$ causing the source optimal trajectory $\xi_{\text{s}}^*$ not being able to continuously deform to $\xi_{\text{relax}}$. This means that there exists a subset of $\mathbf{S}_b$ that divides $\pi_{\text{s}}^*$ and $\pi_{\text{relax}}^*$ into two different homotopy classes. 

Based on this idea, we first construct $\mathbf{S}_{b_1}$ which divides $\pi_{\text{s}}^*$ and $\pi_{\text{relax}}^*$ into two homotopy classes (see line 3-19). We first assign the original $\mathbf{S}_{b}$ to $\mathbf{S}_{b_1}$ (Line 3). We then cut $\mathbf{S}_{b_1}$ into two halves, $S_{h_1}$ and $S_{h_2}$ (Line 5) and update it to one of the two halves based on rules introduced below (see lines 6-18) until we reach the desirable starting barrier $\mathbf{S}_{b_1}$.

On lines 6-8 and 12-14, we use $f_1$ to check whether $\xi_{\text{relax}}$ passes through each half. For each half, if $\xi_{\text{relax}}$ passes through the half (e.g. imagine $\xi_{\text{relax}}$ passes through $S_{h_1}$), the half (in this example $S_{h_1}$) will have a strictly smaller subset dividing $\pi_{\text{s}}^*$ and $\pi_{\text{relax}}^*$ into two different homotopy classes. So we choose to further cut that half ($S_{h_1}$), while ignoring the other half ($S_{h_2}$) (see lines 6-8). 
If $\xi_{\text{relax}}$ does not pass through the half ($S_{h_1}$), we use $f_2$ to check whether this half ($S_{h_1}$) divides $\xi_{\text{relax}}$ and $\xi_{\text{s}}$ into two different homotopy classes. 
If $f_2$ return \texttt{true}, we have found a $\mathbf{S}_{b_1}$ (i.e. $S_{h_1}$) that satisfies Lemma~\ref{app:lemma:x_1} (see lines 9-11). Otherwise, since this set (the half $S_{h_1}$ in our example) does not collide with $\xi_{\text{relax}}$ and does not divide $\xi_{\text{relax}}$ and $\xi_{\text{s}}$ into two different homotopy classes, any of its subsets will not divide $\xi_{\text{relax}}$ and $\xi_{\text{s}}$ into different homotopy classes, and thus we can discard the half and move forward with the other half ($S_{h_2}$) (see lines 12-18). 

The only situation that can make the algorithm not converge is that at one step, both halves do not collide with $\xi_{\text{relax}}$ and do not divide $\xi_{\text{relax}}$ and $\xi_{\text{s}}$ into different homotopy classes. However, since $\xi_{\text{relax}}$ passes through the obstacle set at each step, the above situation can never happen. So the algorithm will converge and the convergence rate can roughly be estimated by $O(\log(|\mathbf{S}_{b}|))$.

After finding a $\mathbf{S}_{b_1}$ that satisfies Lemma~\ref{app:lemma:x_1}, we gradually inflate $\mathbf{S}_{b_1}$ until is passes through $\xi_{\text{relax}}$ (see lines 20-23), because training with such $\mathbf{S}_{b_1}$ can actually push the policy towards the target homotopy class. The illustration of the algorithm is shown in Fig.~\ref{app:fig:sb1_algorithm}.

\begin{figure*}[ht]
\centering
\subfigure[Source Optimal trajectory]{\includegraphics[width=.25\textwidth]{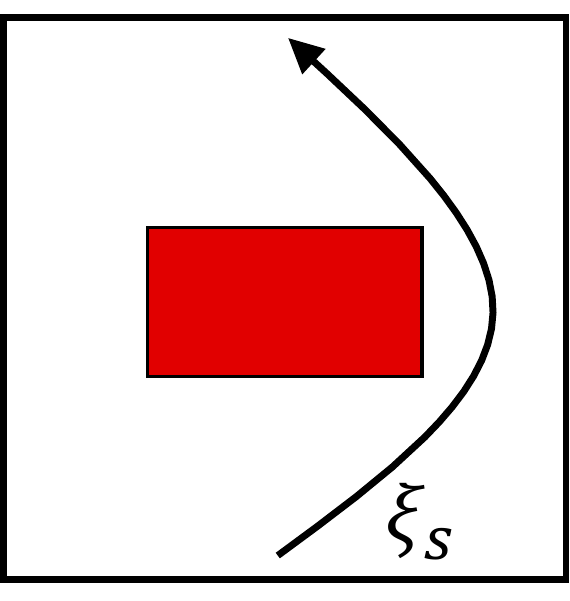}\label{app:fig:sb1_source}}
\hfil
\subfigure[Target Optimal trajectory]{\includegraphics[width=.25\textwidth]{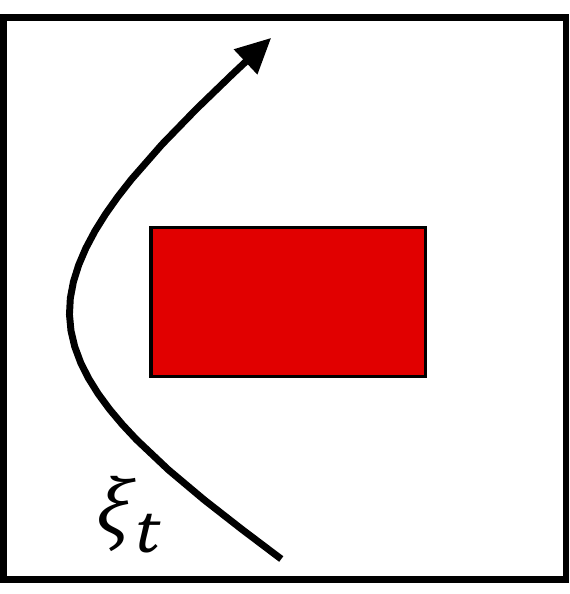}\label{app:fig:sb1_target}}
\hfil
\subfigure[Relaxed optimal trajectory]{\includegraphics[width=.25\textwidth]{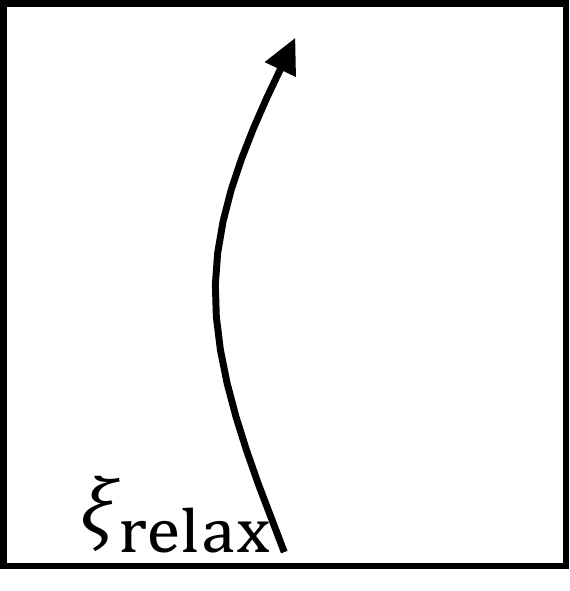}\label{app:fig:sb1_relax}}
\subfigure[Example of cutting the obstacles into halves and selecting the half colliding with $\xi_{\text{relax}}$. We then continually divide the obstacle so $\xi_s$ and $\xi_{\text{relax}}$ are eventually in different homotopy classes (See lines 3-19 of Algorithm 2)]{\includegraphics[width=0.9\textwidth]{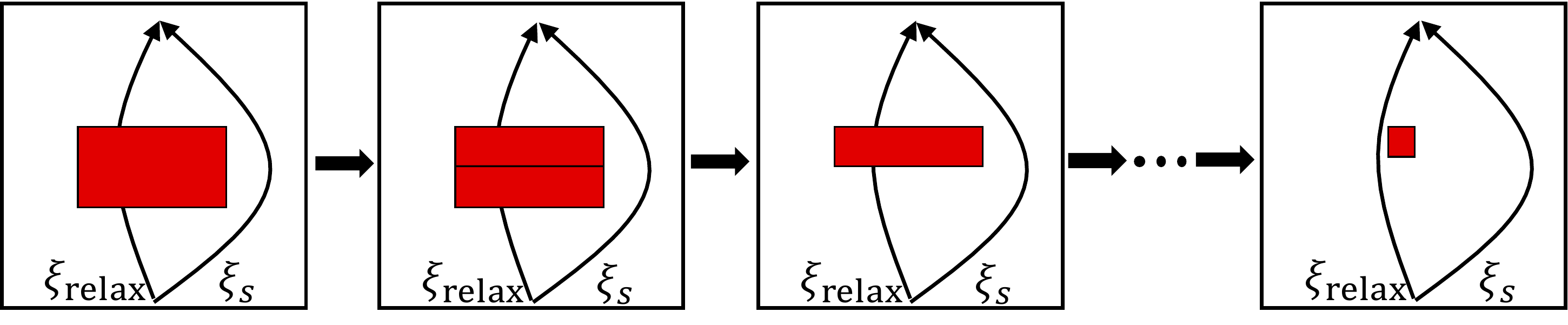}\label{app:fig:sb1_half1}}
\subfigure[Another case of cutting (See lines 3-19 of Algorithm 2)]{\includegraphics[width=0.9\textwidth]{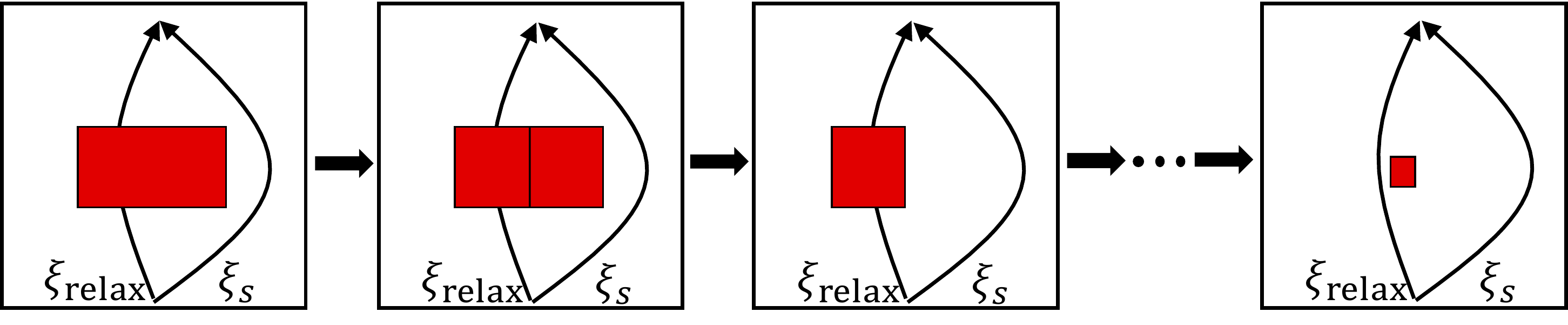}\label{app:fig:sb1_half2}}
\subfigure[Gradually inflating the obstacle and reinforcement learning with the new obstacle pushes the trajectory toward the target homotopy class.  (See lines 20-23 of Algorithm 2)]{\includegraphics[width=0.9\textwidth]{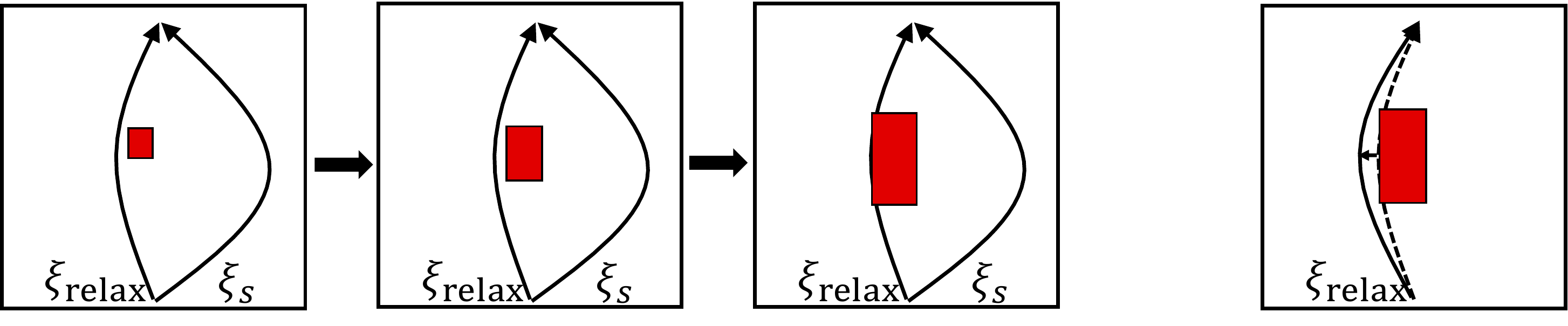}\label{app:fig:sb1_inflate}}
\caption{Illustration of the Algorithm to find the first obstacle set $\mathbf{S}_{b_1}$ for the barrier set size approach.}\label{app:fig:sb1_algorithm}
\end{figure*}

\section{Running Example: Surface Plots}
The car's goal is to navigate to the goal without colliding with the barrier. The car controls a 1-dimensional steering action and observes its 2-dimensional position. We train a linear policy using the REINFORCE algorithm with policy parameters $\theta \in \mathbb{R}^2$. Our learning rate is 1e-3 and discount factor is 0.99. Code is adapted from OpenAI's Vanilla Policy Gradient implementation \href{https://spinningup.openai.com/en/latest/_modules/spinup/algos/pytorch/vpg/vpg.html}{here}.

When generating our surface plots, the REINFORCE loss we use is $l(\theta) = \mathbb{E}_{\pi}[G_t \cdot \ln\pi_{\theta}(a_t|s_t)].$ We discretize our parameter space $\theta \in \mathbb{R}^2$ into buckets of 0.1 in between -1 and 1.3. For each of these policy parameters, we sample 10 loss values and average them to create the surface plots. Loss values are generated over 500 epochs with 100 environment steps.

\section{Experiments}
\subsection{Compared Methods}
\begin{itemize}
    \item \prg{Progressive Neural Networks} In this approach, each task is initialized as a separate neural network. Knowledge is transferred to a new task by connecting the hidden layers of previously learned tasks ~\cite{rusu2016progressive}.
    \item \prg{Batch Spectral Shrinkage} BSS prevents negative transfer to new domains by penalizing the smallest $k$ singular values of features in the source domain ~\cite{chen2019catastrophic}. The authors show that preventing negative transfer improves transfer learning results. In practice, we often found that BSS was numerically unstable, causing training to fail early. 
    \item \prg{$L^2$-SP} This approach prevents the transfer learning model from overfitting on the target domain by penalizing the squared Euclidean distance to the source model parameters ~\cite{xuhong2018explicit}. The authors show that this simple regularizer improves transfer learning results. 
\end{itemize}

\begin{table}[h!]
		\centering
		\scalebox{0.83}{
    	\begin{tabular}{ c | c c | c  }
    	 & \multicolumn{2}{c |}{\textbf{Lunar Lander}} & \multicolumn{1}{c}{\textbf{Nav1: Barrier Sizes}}\\
          & L $\rightarrow$ R & R $\rightarrow$ L & 7 \\
         \hline
        \noalign{\vskip 2pt}    
            Ours:barrier & 80.46$\pm$46.58   & 80.23$\pm$39.76 & 112.3 $\pm$ 111.3  \\
            Ours:reward  & \textbf{75.13}$\pm$34.25   & \textbf{38.43}$\pm$6.46 & "" \\
            PNN   &  117.35$\pm$3.35 & 128.59$\pm$44.56 & 119.2 $\pm$ 125  \\
            $L^2$-SP  & 124.54$\pm$69.99  & 94.59$\pm$51.23 & >256 \\
            BSS   & >300          & >300     & >256     \\
            Fine-tune   & >300    & >300   &   241 $\pm$ 27.5   \\
           \hline
            \noalign{\vskip 2pt}    
            Deeper Network & 119.32$\pm$43.28  & 88.23$\pm$45.41 & \textbf{69.2} $\pm$ 98.1\\
            Dropout& >300    & >300 & >256 \\
            Entropy Bonus & 113.29$\pm$40.12  & 97.45$\pm$39.21 & >256 \\
            \noalign{\vskip 2pt}    
            \hdashline
            \noalign{\vskip 2pt}    
            Random    & 232.32$\pm$ 48.21 & 162.92$\pm$49.54  & 77.1 $\pm$ 40.6 \\
        \end{tabular}
        }
        \caption{Comparing the use of deeper networks, dropout, and larger entropy bonuses against our approach using 5 random seeds. Our approach is able to beat all baselines in Lunar Lander and performs comparably in the Navigation. Using deeper networks works surprisingly well in Navigation and can be used in conjunction with our approach to obtain better results. }
        \label{app:table:cobb}
	\caption*{}
\end{table}
\prg{Comparison to Cobb et al.'s work~\cite{cobbe2019quantifying}} We used the Lunar Lander and the Navigation (barrier size 7) environments to compare against Cobb et al.'s work. Our experiments include using deeper neural networks, increasing the entropy bonus for exploration, and adding dropout. In all of our experiments, we used PPO to train policies using 5 random seeds. To create deeper neural networks, we doubled the number of layers in our policy and value function networks from 2 to 4, each with 64 neurons. We increased the entropy bonus from the default value of 0.01 to 0.05, which was also the optimal value in Cobb et al. To implement dropout we used a dropout rate of $p=0.5$. 

As shown in Table~\ref{app:table:cobb}, adding deeper networks, dropout, and entropy bonuses were less efficient than our approach in the Lunar Lander setting. We find that in the Navigation setting, using deeper networks performs extremely well. These methods can be used in conjunction with our approach to obtain better results.

\subsection{Implementation Details}
We introduce environment details, the main design of the reward and curriculum, training details, and how to compute the number of interaction steps for evaluation.

\prg{Navigation1: Barrier sizes} 

\textit{Datasets \& Environment.}
In this task, the car must navigate to the goal without colliding with barriers. Actions are steering angles; we use a PID controller to maintain a constant velocity. Observations include the car's position, velocity, heading, angular velocity and acceleration. The reward function is a linear combination of the car's distance to the goal, its homotopy class reward, and a barrier collision penalty. The homotopy reward is defined by the car's heading and determines how far left or right the car swerves. We anneal the distance to the goal as well as the homotopy class reward over time such that the homotopy class reward is initially dominant, but later the distance to the goal takes over. We penalize barrier collisions with a -1000 reward. The expected target return $\eta_t^*$ is calculated by training from a random initialization and averaging returns over three runs. In Table 1 of the main paper, we report the performance of our algorithm by recording the average number of interaction steps it takes to reach within $\pm 2$ of the the expected target return, shown in Table~\ref{app:table:target_returns}.

\begin{table*} [h!]
		\begin{center}
		\scalebox{1}{
    	\begin{tabular}{ c c c c|c|c|c|c}
    	 \multicolumn{4}{c|}{\textbf{Nav1: Barrier Sizes}} & \textbf{Nav2} & \textbf{Fetch Reach} & \textbf{Assistive Feeding} & \textbf{Lunar Lander}\\
         1 & 3 & 5 & 7  & &  \\
         \hline
        \noalign{\vskip 2pt}    
         61.89$\pm$2 & 62.8$\pm$2 & 60.75$\pm$2 & 53.74$\pm$2 & >3000 & >10  & 80 $\pm$ 5 & >250
        \end{tabular}
        }
        \end{center}
        \vspace{0.5em}
        \caption{Summary of the target reward thresholds we used in our evaluations. In the main paper, we report the average number of interaction steps it takes to reach these reward thresholds in the target task.}
        \label{app:table:target_returns}
\end{table*}

\textit{Models \& Algorithms.}  For all barrier sizes except 7 we use a single-step curriculum where we directly reintroduce the barrier after removing it. Note that this method is equivalent to both curriculum learning by reward weight and curriculum learning by barrier set size methods. In the reward weight approach, $\alpha$ is set to 0 and then back to 1; in the barrier set size approach, the barrier set $\mathbf{S}_b$ is set to the empty set and then back to $\mathbf{S}_b$. For barrier size 7, we use a barrier set approach where the barrier sizes we used are as follows: [4, 7].

\textit{Experimental Results.} We train policies with PPO using the Stable Baselines implementation~\cite{stable-baselines}. We train for 2000 timesteps with 128 environment steps per update (256,000 total steps). Training a policy takes around 30 minutes. We evaluate our policy every 128 steps. We use the default hyperparameters in the Stable Baselines implementation which can be found \href{https://stable-baselines.readthedocs.io/en/master/modules/ppo2.html#stable_baselines.ppo2.PPO2}{here}. In Table 1 of the main paper, we average results over 5 runs. To generate runs for our method, we sample policies from the penultimate curriculum stage instead of source policies.

\prg{Navigation2: Four Homotopy Classes.} 

\textit{Datasets \& Environment.} We use the same action space, dynamics, and observations as the Barrier Sizes experiment. However, in this experiment, we change the number of barriers and the reward function. As shown in Figure 5 in the main text, we now have two barriers with length $9$ and width $4$ in this environment. Barrier 1 is between the middle and the bottom and barrier 2 is between middle and the top. We design four homotopy classes in the form of A-B, where $A$ can be Left or Right and $B$ can be Left or Right. $A$ denotes whether the car is passing barrier 1's left or right side. $B$ means the same for barrier 2. For example, Left-Left means pass barrier 1 on the left and also passing barrier 2 on the left.

We punish the car with $-1000$ reward for each collision with barriers. If the car pass the correct side of the barrier 1 or 2, the car receives a $+500$ reward. This reward ensures that the trajectory falls in the correct homotopy class. If the car reaches the goal, the car will receive $+2000$ reward. We also add a potential field based reward to encourage the car to reach the goal more quickly. Since the potential field reward is low compared to the reward when reaching the goal introduced above, if the car collides with barriers, fails to reach the target, or does not pass the two barriers on the correct side, it will receive a reward lower than $+3000$. We thus define a trajectory with return higher than $+3000$ as a successful trajectory in the target homotopy class. We manually check each trajectory higher than $+3000$ and further discard corner cases where the trajectory is higher than $+3000$ but it is not a expected trajectory in the homotopy class. We compute the number of interaction steps based on the first time the policy achieves $>3000$ reward for baseline methods. For each curriculum, we use $>3000$ reward to decide whether the current curriculum converges.

\textit{Models \& Algorithms.} 
For the barrier set size curriculum, we have different ways to inflate the barrier state set since we have two barriers now. We try both strategies of adding the top barrier back first and then add the bottom back or add the bottom back first and then add the top back. We take the better strategy between the two. We have also tried simultaneously adding two barriers back but we found that it always needs more interaction steps to fine-tune. Since we have a rectanglular barrier, we gradually increase the barrier length from $0$ to $9$. So we set the barrier set by as a series of factors $(\beta_1, \gamma_1),\cdots,(\beta_K, \gamma_K)$, where each $\mathbf{S}_{b_{i}}$ corresponds to barrier 2 size $9\beta_i$ and barrier 1 size $9\gamma_i$. Since we add one barrier back and the next, there are two situations: $\beta_{i}|_{i=1}^{k_1}=0<\cdots<\beta_{K}=1$ and $0=\gamma_1<\cdots<\gamma_{k_1}=1=\gamma_{i}|_{i\ge k_1}$ or $\gamma_{i}|_{i=1}^{k_2}=0<\cdots<\gamma_{K}=1$ and $0=\beta_1<\cdots<\beta_{k_2}=1=\beta_{i}|_{i\ge k_2}$. To tune the hyper-parameters, we first divide the range $[0,1]$ into several ranges. Then we assess the difficulty of fine-tuning for each step. If it is too difficult, we further divide it or otherwise we preserve it. 

The exact values for $(\beta, \gamma)$ sequence for adding the barrier 2 first is:
\begin{align*}
    &[0.2,0],[0.5,0],[1,0],[3,0],[6,0],[9,0],\\
    &[9,0.2],[9,0.5],[9,1.],[9,3.], [9,6.],[9,8.],[9,8.25],\\
    &[9,8.75],[9,9] 
\end{align*}

The exact values for $(\beta, \gamma)$ sequence for adding the barrier 1 first is:
\begin{align*}
&[0,0.2],[0,0.5],[0,1.],[0,3.],[0,6.],[0,9.],[0.01,9],\\
&[0.05,9],[0.1,9],[0.2,9.],[0.5,9.],[1.,9.],[3.,9.]\\
&[6.,9.],[9,9]
\end{align*}

For reward weight approach, the $\alpha$ sequence is that
\begin{align*}
    [0.001,0.003,0.01,0.03,0.1,0.3,1.0]
\end{align*}

\begin{table*} [h!]
		\begin{center}
		\resizebox{1.0\textwidth}{!}{
    	\begin{tabular}{ c|c c c c c c c c c}
    	  & \multicolumn{9}{c}{\textbf{Transfer Tasks}} \\
          & RR $\rightarrow$ LL & RR $\rightarrow$ LR & RR $\rightarrow$ RL & LR $\rightarrow$ LL & LR $\rightarrow$ RL & LR $\rightarrow$ RR & RL $\rightarrow$ LL & RL $\rightarrow$ LR & RL $\rightarrow$ RR\\
         \hline
        \noalign{\vskip 2pt}    
            Ours:barrier  & 38.4$\pm$7.3 & 229.8$\pm$64.5 & \textbf{48.5}$\pm$10.8$^{p,l,b,f}$  & \textbf{17.9}$\pm$2.1$^{p,b,f}$ & 222.1$\pm$60.8 & 99.4$\pm$2.6 & 74.9$\pm$3.4 & 190.8$\pm$49.6$^{l,b,f}$ & \textbf{42.0}$\pm$11.0$^{p,b}$ \\
            Ours:reward  & \textbf{11.1}$\pm$4.3$^{p,l,b,f}$ & \textbf{173.5}$\pm$6.7$^{p,l,b,f}$ & 108.2$\pm$17.6 & 18.1$\pm$0.9 & \textbf{203.9}$\pm$78.0$^{p,l,b,f}$ & \textbf{49.9}$\pm$6.6$^{p,l,b,f}$ & \textbf{47.9}$\pm$4.8$^{p,l,b,f}$ & 225.9$\pm$33.6 & 49.2$\pm$9.2  \\
            PNN & 55.2$\pm$28.1 & >300 & 131.9$\pm$17.1 & 36.2$\pm$11.9 & >300 & >300 & 85.6$\pm$7.8 & \textbf{188.9}$\pm$6.7 & 99.7$\pm$35.3\\
            $L^2$-SP  & 29.2$\pm$14.6 & >300 & >300 & 22.5$\pm$5.3 & >300 & 97.4$\pm$37.9 & 79.3$\pm$5.9 & >300 & 45.5$\pm$24.2\\
            BSS   & >300 & >300 & >300 & >300 & >300 & >300 & >300 & >300 & >300 \\
            Fine-tune   & 39.6$\pm$11.6 & >300 & >300 & 31.3$\pm$9.8 & >300 & >300 & 59.6$\pm$11.1 & >300 & 52.3$\pm$12.1 \\
        \noalign{\vskip 2pt}    
        \hdashline
        \noalign{\vskip 2pt}    
            Random & 68.2$\pm$10.5 & 43.5$\pm$4.1 & >300 & 68.2$\pm$10.5 & >300 & 169.4$\pm$27.1  & 68.2$\pm$10.5 & 43.5$\pm$4.1 & 169.4$\pm$27.1  \\
        \end{tabular}
        }
        \end{center}
        \vspace{0.5em}
        \caption{Fine-tuning to multiple homotopy classes works well with our approach. In certain tasks, the target is already easily trainable from a random initialization. If this knowledge is known a-priori, there is no need to use our approach to transfer to the target task.}
        \label{app:table:navigation2}
\end{table*}

\textit{Experimental Results.} Similar to the Barrier Sizes experiment, we train policies with PPO using the Stable Baselines implementation and use their default hyperparameters, found \href{https://stable-baselines.readthedocs.io/en/master/modules/ppo2.html#stable_baselines.ppo2.PPO2}{here}. We train for total of $300,000$ environment steps with 128 environment steps per update. Training a policy takes around 60 minutes. We evaluate our policy every 1024 environment steps. In Table 2 of the main text as well as Table~\ref{app:table:navigation2} in the supplementary materials, we average results over $10$ runs. 

In the main text, we reported results for \emph{LL} $\rightarrow *$. The results for the remaining nine tasks are shown in Table \ref{app:table:navigation2}. We can observe that the our fine-tuning approach with barrier set size or reward weight outperforms all the other methods in all tasks except \emph{RL}$\rightarrow$ \emph{LR}. In \emph{RL}$\rightarrow$ \emph{LR}, the performance of our approach is still comparable with other methods.

\begin{figure*}[ht]
    \centering
    \subfigure[LL]{\includegraphics[width=.23\textwidth]{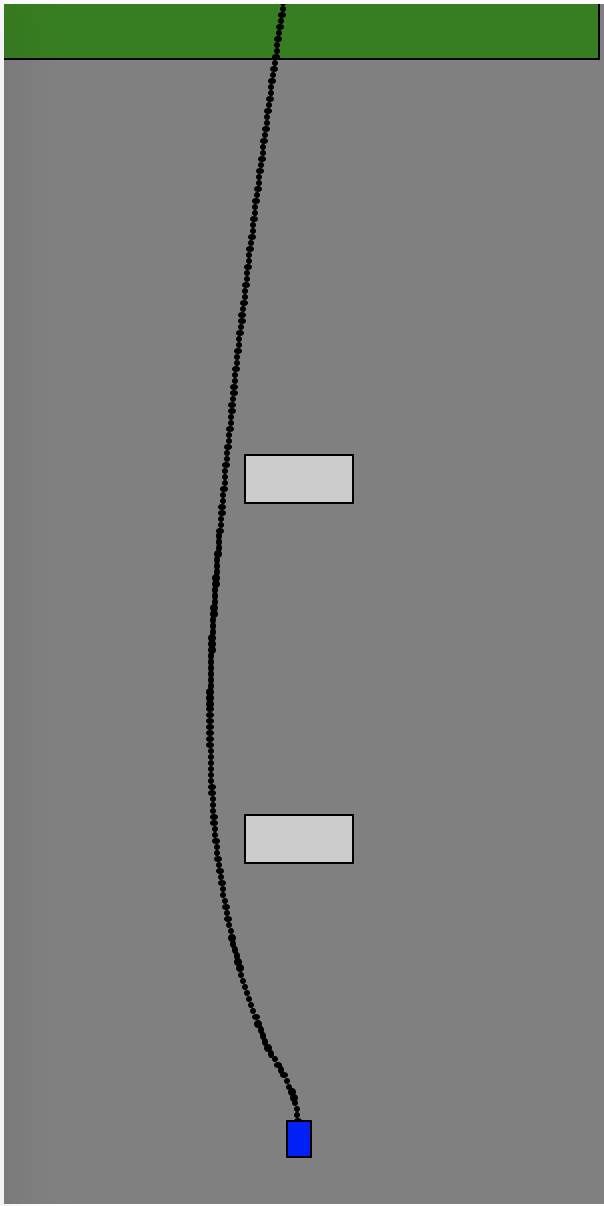}}
    \subfigure[LR]{\includegraphics[width=.23\textwidth]{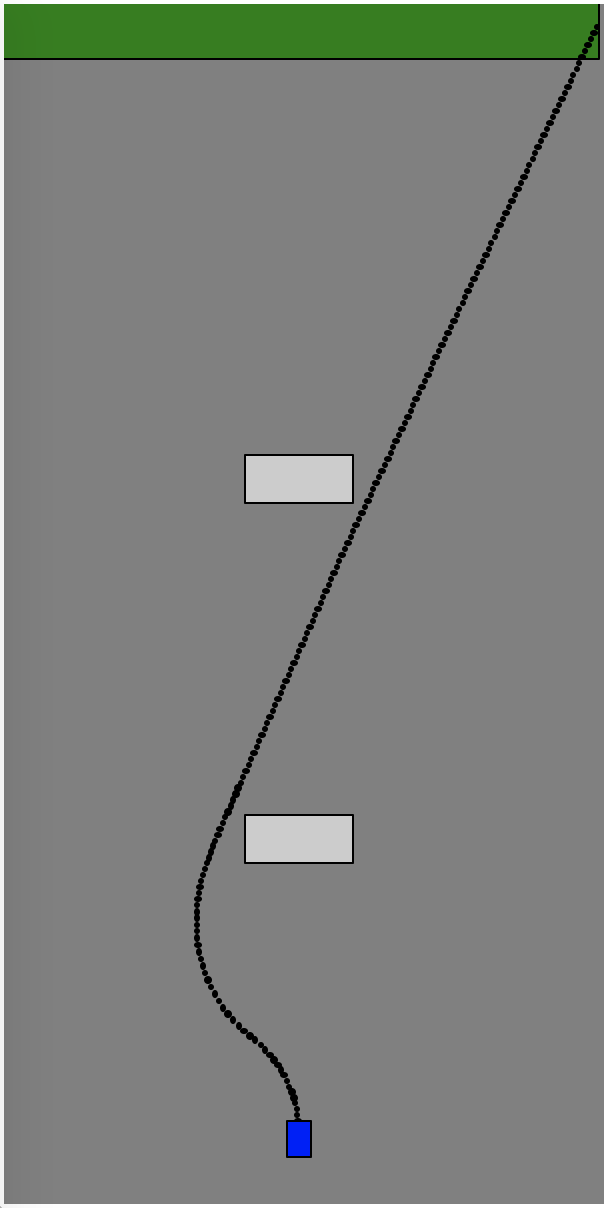}}
    \subfigure[RL]{\includegraphics[width=.23\textwidth]{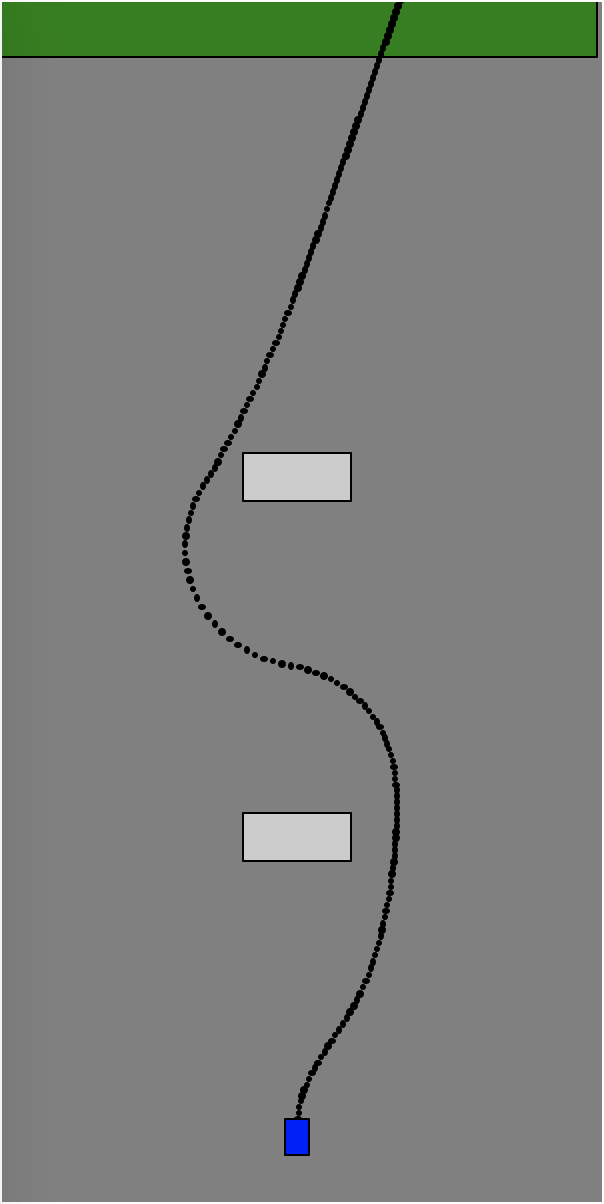}}
    \subfigure[RR]{\includegraphics[width=.23\textwidth]{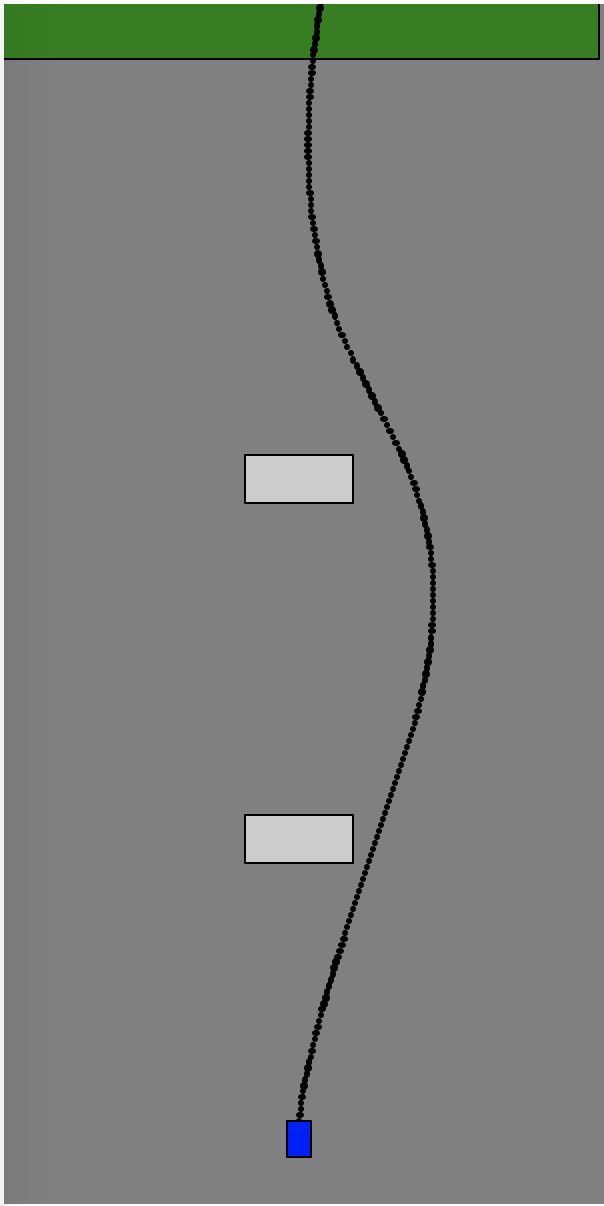}}
    \caption{Figure of the expert trajectory for different multi-obstacle environments.}
    \label{app:fig:expert_multiobj}
\end{figure*}

The results for \emph{LL}$\rightarrow$\emph{RL} and \emph{RR}$\rightarrow$\emph{LR} are not similar. In fact, the obstacles and the initial position of the car are at the center, the reward function for \emph{LL} and \emph{RR} are symmetrical, and the reward function for \emph{RL} and \emph{LR} are symmetrical. However, we randomly initialize the policy and learn a stochastic policy by reinforcement learning as the optimal policy for each reward function. As illustrated in Fig.~\ref{app:fig:expert_multiobj}, the trajectories for \emph{LL} and for \emph{RR} are asymmetrical due to this stochasticity. Similarly, the trajectories for $LR$ and for \emph{RL} are asymmetrical. 

For \emph{LL}$\rightarrow$\emph{RL} task, the proposed approach performs much better than Random while for \emph{RR}$\rightarrow$\emph{LR} tasks, Random performs better than the proposed method. We would like to emphasize that the divergence between \emph{LL} and \emph{RL} policy is smaller than the divergence between $RR$ and \emph{LR} policy including the distance of the final goal. So in this particular case, it turns out that transfer from \emph{RR} to \emph{LR} is more difficult than \emph{LL} to \emph{RL}, while training from a random initialization does not suffer from this difficulty that arises from the stochasticity of learning an initial policy.

\prg{2. Fetch Reach}

\textit{Datasets \& Environment.}
We use OpenAI Gym's Fetch Reach environment~\cite{brockman2016openai} that features a 7-DoF Fetch robotics arm. Actions are 8-dimensional: 3 dimensions specify the desired end effector movement in Cartesian coordinates, 4 dimensions specify the rotation of the end effector (quaternion), and the last dimension controls opening and closing the gripper. Observations include the Cartesian position of the end effector and its positional velocity, as well as the quaternion rotation of the end effector and its rotational velocity. The reward function is a linear combination of the distance to the goal, the homotopy class reward, barrier collision penalty, and a goal completion reward. Distance to the goal is defined by one dimension in $\mathbb{R}^3$ that determines whether the Fetch manipulator has extended its end effector far enough forward. The homotopy class reward is defined by another dimension in $\mathbb{R}^3$ that describes how far right or left Fetch extends its arm while reaching the goal. The homotopy class reward is annealed over time so that it is weighted less in later timesteps. We penalize barrier collisions with -10 and reward goal completion with +10. In Table 3 of the main paper, we then report the average number of interaction steps it takes to reach a return > 10.

\textit{Models \& Algorithms.}
For the \emph{L}$\rightarrow$\emph{R} task, we use a single-step curriculum where we remove the barrier and then re-introduce it. For the \emph{R} $\rightarrow$ \emph{L} task, the single-step curriculum did not work well, so we use a reward-weight curriculum where barrier collisions were penalized in the following $\alpha$ sequence: [0, 0.1, 0.5, 1.0]. We hypothesize that the asymmetry between the two tasks is caused by joint constraints in the Fetch manipulator; the asymmetry is a good example of our problem setting where some tasks may be harder to fine-tune to than others. 

\textit{Experimental Results.} We train policies with HER using the Stable Baselines implementation~\cite{stable-baselines}. We train for a total of 512,000 timesteps and evaluate our policy every 10,000 steps. Training a policy takes around 4 hours. We use the default hyperparameters in the Stable Baselines implementation which can be found \href{https://stable-baselines.readthedocs.io/en/master/modules/her.html#stable_baselines.her.HER}{here}. In Table 3 of the main paper, we average results over 5 runs. To generate runs for our method, we sample policies in the penultimate curriculum stage instead of using source policies. We found that training source policies in the Left and Right environments were challenging. Therefore, we trained our source policies in a relaxed environment where there was no barrier, or had a reduced cost of collision (-1 instead of -10). Since we assume that the source policy is given in our problem setting, it is okay to obtain in a relaxed setting. Finally, often found that BSS often ran into numerical issues that ended the training early; we consider these to be unsuccessful instances of fine-tuning.

\prg{3. Mujoco Ant}
\begin{figure}[h!]
    \centering
    \includegraphics[scale=0.58]{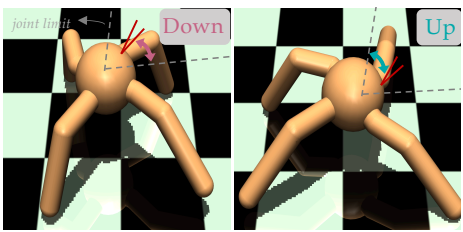}
    \caption{Mujoco Ant environment with a non-physical barrier. The red lines are the barrier states, or the set of joint angles the leg cannot move to. The grey dotted lines represent the upper right leg's joint limits.}
    \label{app:fig:ant}
\end{figure}

\textit{Datasets \& Environment.}
We experiment with Mujoco Ant environment~\cite{todorov2012mujoco} in OpenAI Gym~\cite{brockman2016openai}. In the Ant environment, as shown in Fig.~\ref{app:fig:ant}, we have an ant with four legs where each leg consists of two links and two joints. The state is represented as the states of all joint angles and the location and velocity of the center ball. The actions are 8-dimensional, which correspond to the rotation of $4\times2$ joints on the ant. The goal for the ant is to move forward in the direction of the x-axis in the environment as fast as possible. So the reward function for the original Ant environment corresponds to the distance traveled in the x-axis, the joint action cost, the force between links, and whether the ant survives. We create different homotopy classes by restricting the rotation angles of the upper right leg. We set the obstacle states as a set of joint angles $[\frac{\pi}{4}-0.2, \frac{\pi}{4}+0.2]$, which means that if the upper right leg rotates within this range, it will receive a large negative reward. We then create two homotopy classes: (1) the reward is higher if the joint angle of the upper right leg is larger than $\frac{\pi}{4}+0.2 \text{rad}$, denoted by \emph{Down}; (2) the reward is higher if the joint angle of the upper right leg is smaller than $\frac{\pi}{4}-0.2 \text{rad}$, denoted by \emph{Up}. We only perform the transfer task from \emph{Up} to \emph{Down} since the opposite direction is easy for all the baselines and the performance is not differentiable across methods.

\textit{Models \& Algorithms.}
For the reward-weight curriculum, we use a sequence of $\alpha$ as $0.001, 0.01, 0.1, 1$, where $\alpha$ multiplies the term related to the negative reward of the obstacle states. For the barrier-size curriculum, we use a sequence of obstacle angle ranges as $[\frac{\pi}{4}-0.002, \frac{\pi}{4}+0.002]$, $[\frac{\pi}{4}-0.02, \frac{\pi}{4}+0.02]$, and $[\frac{\pi}{4}-0.2, \frac{\pi}{4}+0.2]$. 

\textit{Experimental Results.}
We train the relaxed policy and fine-tune the policy at each curriculum step by TRPO algorithm implemented in PyTorch framework\footnote{https://github.com/ikostrikov/pytorch-trpo}. We train for total 10,000,000 environment steps, with 5000 environment steps per update to ensure convergence, where we can early stop if the policy achieves 1100+-100 reward. Training a policy takes around 1 hour. We evaluate our policy every 25,000 environment steps. We average our results over $10$ runs. To generate these runs, we initialize the policy parameters with different random seeds.

\prg{3. Assistive Gym}

\begin{figure*}
    \centering
    \includegraphics[width=\textwidth]{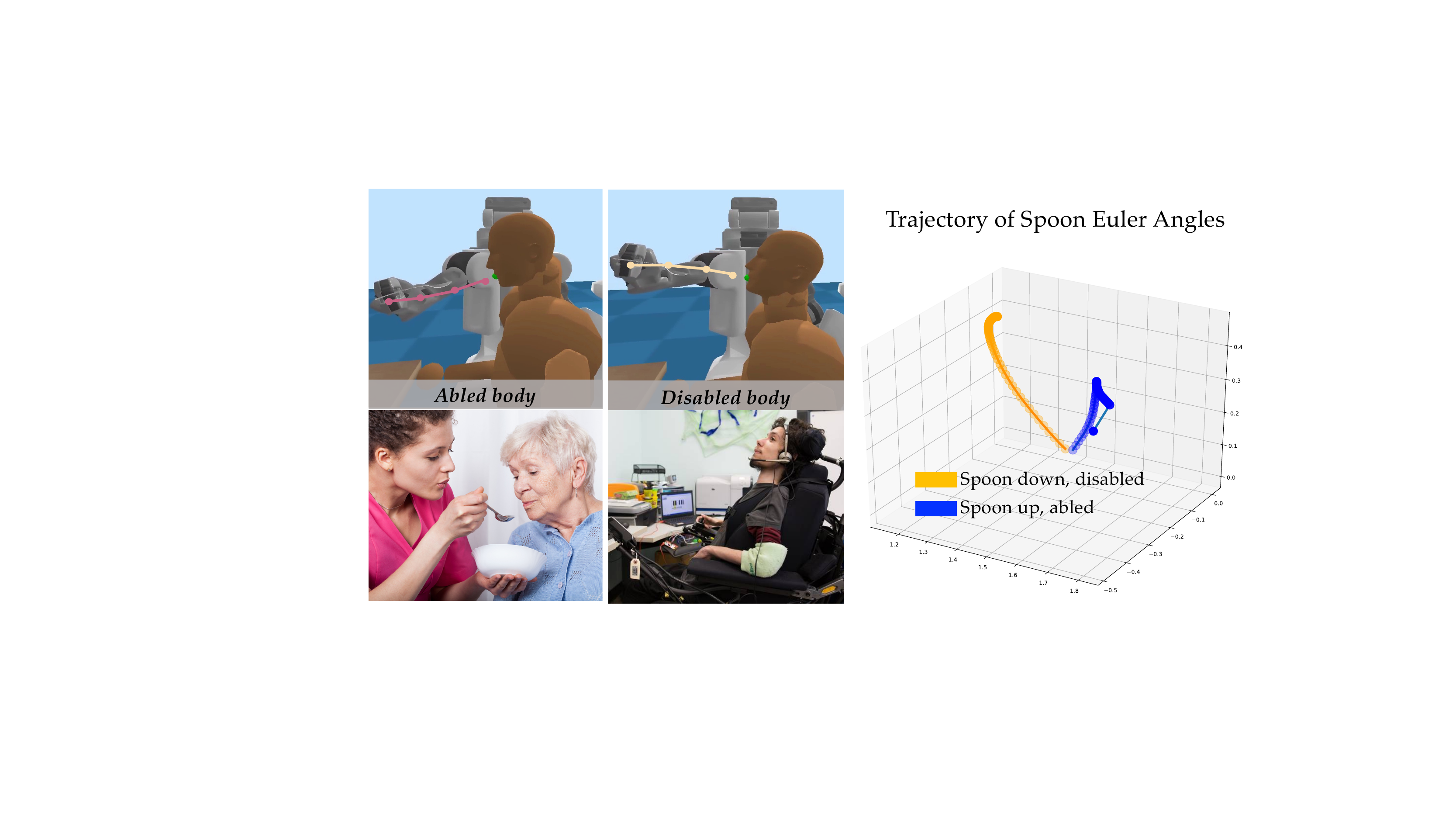}
    \caption{On the upper left, we plot the spoon trajectories in our assistive feeding domain for both abled bodies and disabled bodies. The corresponding images on the bottom show real-life examples of abled (lower left, \href{https://thecounter.org/safety-violations-nursing-home-kitchens-endanger-residents/}{source}) and disabled bodies (lower right, \href{https://equalengineers.com/mental-health-role-model-campaign/}{source}). The disabled patient is pictured in a wheelchair. It is common for patients to have their head back in order to control the head tracking and pointing device. The senior woman is an exampled of an abled body who points her gaze slightly downward as she is being fed. We find that by training with abled bodies, we can quickly learn a good policy for feeding disabled bodies. On the right, we show the trajectory of evolving spoon angles over time. The spoon down trajectory corresponds to feeding the disabled person whereas spoon up corresponds to feeding the abled person.}
    \label{app:fig:assistive}
\end{figure*}

\textit{Datasets \& Environment.}
Assistive Gym\footnote{https://github.com/Healthcare-Robotics/assistive-gym}~\cite{erickson2020assistivegym} is a physics-based simulation framework for physical human-robot interaction and robotic assistance. We implement our experiment under the feeding task, which uses a PR2 robot with 7 DoF manipulator. For the observation space, we use the distance from the spoon to the mouse, the spoon orientation, robot joint positions, and forces. For the action space, we use the joint action of the robot arm. As introduced in the main text, in this environment, we have two homotopy classes. One is feeding an abled person and the other is feeding a disabled person. The non-physical barrier is represented by a set of states. We let the spoon angle to feed  an abled person be $\theta_1$ and the angle for a disabled person as $\theta_2$. Then the angle between $\theta_1$ and $\theta_2$ is not good for feeding. To decide on the barrier states, we need to consider the following issues: 1) when the spoon is far away from the human mouth, we do not need the robot arm to hold the spoon up or down to fit the human mouth. We do not punish the robot for the wrong joint angle when the spoon is far away from the human mouth. 2) When the spoon is close to the human mouth, it is dangerous to hold spoon in the incorrect direction since it may pour the food on human body or face. Therefore, the states where the spoon is farther than $d_0$ from the human mouth are not in barrier state set. When the distance $d$ is smaller than $d_0$, we set the barrier states as the spoon angle in $[\theta_0-\frac{d_0-d}{d_0}\nabla\theta, \theta_0+\frac{d_0-d}{d_0}\nabla\theta]$, where $\theta_0$ is the median of the angle $\theta_1$ and $\theta_2$ and $\nabla\theta=\lvert\theta_1-\theta_2\rvert / 2$ is the half of the barrier angle range. The closer the spoon is to the mouth, the larger the size of barrier state set. We also give a homotopy reward if the spoon points to the correct homotopy direction during feeding (i.e. up for abled person and down for disabled person). The robot will receive a $0$ reward or otherwise a negative reward. We do not give positive rewards to ensure that the robot arm finishes the feeding tasks in the shortest time. Other parts of the reward function follow the original reward design in Assistive Gym environment. The final reward for a trajectory is $80\pm5$ if the robot arm successfully feeds the person and the trajectory is in the correct homotopy class. So we compute the number of interaction steps as the first policy achieving $80\pm5$ reward in the training process.

\textit{Models \& Algorithms.}
For the curriculum, we find that we only need one step curriculum, that is directly fine-tuning from relaxed policy to the target reward. 

\textit{Experimental Results.}
We train the relaxed policy and curriculum policies with PPO implemented in PyTorch framework. The python package is called $a2c\_ppo\_acktr$. We train for total 20,000,000 environment steps with 2400 environment steps per update to ensure convergence, where we can do early stop if the policy achieves $80\pm5$ reward. Training a policy takes around 12 hours. We evaluate our policy every 24,000 environment steps. In Table 3 of the main text, in the supplementary materials, we average results over 10 runs. To generate these runs, we sample three policy $\pi_s$ for abled person learned by initialized with different random seeds.

\prg{4. Lunar Lander}

\textit{Datasets \& Environment.}
We designed the reward function as follows: if the lander collides with the barrier in the middle, it will receive a $-100$ reward. The lander receives a reward in $[-5,5]$ at each time step based on which side of barrier it is in: if it is on the correct side of the barrier according to the homotopy class, it will receive higher reward. If the lander lands between two flags, it will receive an additional $+100$ reward. The remaining part of the reward follows the official Lunar Lander game: If it successfully lands on the surface safely, it will receive $+100$ reward. Otherwise, it will receive a $-100$ reward. The lander is punished by a potential field based the distance to the surface and the power consumed. All in all, the reward is higher than $+250$ for a correct trajectory in the homotopy class. We compute the number of interaction steps it takes for the policy to first achieve a reward greater than $250$. For each curriculum, we determine convergence based on whether the policy achieves $>250$ reward.

Here the barrier set size can also be defined by an weight $\beta$ which ranges from $0$ to $1$ to increases the barrier size from $0$ to the original size. Similar to our Navigation 2 experiment, we design the curriculum by first divide the hyper-parameter range and further divide the sub-range that is difficult for fine-tuning. 

\textit{Models \& Algorithms.}
The exact curriculum for barrier set size is: $[0.2,0.4,0.6,1.0]$.
The exact curriculum for reward weight is: $[0.1,0.4,0.7,1.0]$.

\textit{Experimental Results.}
The training details are the same as the Navigation 2 experiment.

\end{document}